
\documentclass{article}

\usepackage{microtype}
\usepackage{graphicx}
\usepackage{subfigure}
\usepackage{booktabs} 
\usepackage{enumitem}

\usepackage{hyperref}


\usepackage[accepted]{icml2024}


\usepackage{amsmath}
\usepackage{amssymb}
\usepackage{mathtools}
\usepackage{amsthm}
\usepackage{bbm} 
\usepackage{bm} 
\usepackage{multicol}

\RequirePackage{algorithm}
\RequirePackage{algorithmic}

\usepackage[capitalize,noabbrev]{cleveref}

\theoremstyle{plain}
\newtheorem{theorem}{Theorem}[section]

\newtheorem{lemma}[theorem]{Lemma}
\newtheorem{corollary}[theorem]{Corollary}
\theoremstyle{definition}
\newtheorem{definition}[theorem]{Definition}

\theoremstyle{remark}

\usepackage[textsize=tiny]{todonotes}

\usepackage{xspace}

\newcommand{\COMID}{\textup{\textbf{\texttt{COMID}}}\xspace}
\newcommand{\COMIDA}{\textup{\textbf{\texttt{COMIDA}}}\xspace}
\newcommand{\COMIDAMDP}{\textup{\textbf{\texttt{COMIDA-MDP}}}\xspace}
\newcommand{\COGDA}{\textup{\textbf{\texttt{COGDA}}}\xspace}

\newcommand{\xt}{\xvec_t}
\newcommand{\yt}{\yvec_t}



\renewcommand{\vec}[1]{{\boldsymbol{{#1}}}}

\newcommand{\Em}{\bm{E}}
\newcommand{\Pm}{\bm{P}}
\newcommand{\xvec}{\bm{x}}
\newcommand{\xivec}{\bm{\xi}}
\newcommand{\yvec}{\bm{y}}
\newcommand{\bvec}{\bm{b}}
\newcommand{\cvec}{\bm{c}}
\newcommand{\Mm}{\bm{M}}
\newcommand{\vvec}{{\bm{v}}}
\newcommand{\uvec}{\bm{u}}

\newcommand{\uhat}{\wh{\uvec}}

\newcommand{\muvec}{\bm{\mu}}

\newcommand{\nuvec}{\bm{\nu}}
\newcommand{\gmu}{\bm{g}_{\mu}(t)}
\newcommand{\gvec}{\bm{g}}

\newcommand{\gx}{\bm{g}_{x}(t)}
\newcommand{\gy}{\bm{g}_{y}(t)}

\newcommand{\Hu}[1]{\bm{h}_{\bm{u}}(#1)}

\newcommand{\gu}{\bm{g}_{\bm{u}}(t)}

\newcommand{\gtx}{\wt{\bm{{g}}}_{x}(t)}
\newcommand{\gtu}{\wt{\bm{{g}}}_{u}(t)}
\newcommand{\gty}{\wt{\bm{{g}}}_{y}(t)}
\newcommand{\gtv}{\wt{\bm{{g}}}_{v}(t)}
\newcommand{\gtmu}{\wt{\bm{{g}}}_{\mu}(t)}

\newcommand{\oxvec}{\overline{\xvec}}
\newcommand{\oyvec}{\overline{\yvec}}

\newcommand{\omuvec}{\overline{\muvec}}
\newcommand{\ovvec}{\overline{\vvec}}

\newcommand{\freg}{f^{\pa{\text{reg}}}}
\newcommand{\lt}{\ell_{t}}
\newcommand{\ltreg}{\lt^{\pa{\text{reg}}}}

\newcommand{\Y}{\mathcal{Y}}
\newcommand{\X}{\mathcal{X}}
\newcommand{\U}{\mathcal{U}}
\newcommand{\F}{\mathcal{F}}
 \newcommand{\A}{\mathcal{A}}

\newcommand{\real}{\mathbb{R}}
\newcommand{\Rn}[0]{\mathbb{R}} 

\newcommand{\Sw}{\mathcal{S}}
\newcommand{\Atot}{SA}
\newcommand{\AAtot}{\Sw\times \A}

\newcommand{\HH}{\mathcal{H}}

\newcommand{\OO}{\mathcal{O}}

\newcommand{\II}[1]{\mathbb{I}_{\left\{#1\right\}}}
\newcommand{\LL}{\mathcal{L}}

\newcommand{\Lreg}{\LL^{\pa{\text{reg}}}}

\newcommand{\PP}[1]{\mathbb{P}\left[#1\right]}

\newcommand{\EE}[1]{\mathbb{E}\left[#1\right]}

\newcommand{\EEt}[1]{\mathbb{E}_t\left[#1\right]}
\newcommand{\EEpi}[1]{\mathbb{E}_\pi\left[#1\right]}
\newcommand{\EEcpi}[2]{\mathbb{E}_\pi\left[#1\middle|#2\right]}

\newcommand{\EEc}[2]{\mathbb{E}\left[#1\left|#2\right.\right]}

\def\argmin{\mathop{\mbox{ arg\,min}}}
\def\argmax{\mathop{\mbox{ arg\,max}}}
\newcommand{\ra}{\rightarrow}

\newcommand{\iprod}[2]{\left\langle#1,#2\right\rangle}

\newcommand{\norm}[1]{\left\|#1\right\|}
\newcommand{\abs}[1]{\left|#1\right|}

\newcommand{\twonorm}[1]{\norm{#1}_2}
\newcommand{\sqtwonorm}[1]{\norm{#1}_2^{2}}
\newcommand{\sqopnorm}[2]{\norm{#1}_{#2}^{2}}
\newcommand{\infnorm}[1]{\norm{#1}_\infty}
\newcommand{\spannorm}[1]{\norm{#1}_{\text{sp}}}

\newcommand{\ev}[1]{\left\{#1\right\}}
\newcommand{\pa}[1]{\left(#1\right)}
\newcommand{\bpa}[1]{\bigl(#1\bigr)}
\newcommand{\Bpa}[1]{\Bigl(#1\Bigr)}

\newcommand{\dom}[1]{\text{dom}\pa{#1}}

\newcommand{\wh}{\widehat}
\newcommand{\wt}{\widetilde}

\newcommand{\transpose}{^\mathsf{\scriptscriptstyle T}}

\definecolor{PalePurp}{rgb}{0.66,0.57,0.66}

\newcommand{\DDKL}[2]{\mathcal{D}_{\textup{KL}}\pa{#1\middle\|#2}}
\newcommand{\DDf}[3]{\mathcal{D}_{#1}\pa{#2\middle\|#3}}

\newcommand{\DDx}[2]{\mathcal{D}_{x}\!\pa{#1\middle\|#2}}
\newcommand{\DDy}[2]{\mathcal{D}_{y}\!\pa{#1\middle\|#2}}

\newcommand{\DDu}[2]{\mathcal{D}_{u}\!\pa{#1\middle\|#2}}

\newcommand{\VV}{\mathcal{V}}
\newcommand{\HHx}{\HH_x}
\newcommand{\HHy}{\HH_y}
\newcommand{\HHv}{\HH_v}
\newcommand{\HHu}{\HH_u}

\newcommand{\R}{\mathbb{R}}






\DeclarePairedDelimiterX\ip[2]{\langle}{\rangle}{#1,#2}

\let\P\undefined
\DeclarePairedDelimiterXPP\P[1]{\mathbb{P}}(){}{
    
    #1
}

\DeclarePairedDelimiterXPP\E[1]{\mathbb{E}}[]{}{
    
    #1
}

\DeclarePairedDelimiterXPP\Es[2]{\mathbb{E}_{#1}}[]{}{
    
    #2
}

\newcommand{\rvec}{\bm{r}}

\DeclarePairedDelimiterXPP\Ipt[2]{{#1}\transpose}(){}{#2}
\DeclarePairedDelimiterXPP\Iptr[2]{}(){\transpose{#2}}{#1}

\newcommand{\subjectto}{\mathrm{subject\ to}}

\icmltitlerunning{Dealing with unbounded gradients in stochastic saddle-point optimization}

\begin{document}
\allowdisplaybreaks
\twocolumn[
\icmltitle{Dealing with unbounded gradients in stochastic saddle-point 
optimization}



\icmlsetsymbol{equal}{*}

\begin{icmlauthorlist}
\icmlauthor{Gergely Neu}{upf}
\icmlauthor{Nneka Okolo}{upf}
\end{icmlauthorlist}

\icmlaffiliation{upf}{Universitat Pompeu Fabra, Barcelona, Spain}

\icmlcorrespondingauthor{Gergely Neu}{gergely.neu@gmail.com}
\icmlcorrespondingauthor{Nneka Okolo}{nnekamaureen.okolo@upf.edu}

\icmlkeywords{saddle-point optimization, bilinear games, convex optimization, stochastic optimization}

\vskip 0.3in
]



\printAffiliationsAndNotice{}  

\renewcommand{\hat}{\wh}

\begin{abstract}
	We study the performance of stochastic first-order methods for finding 
	saddle points of convex-concave 
	functions. A notorious challenge faced by such methods is that the 
	gradients can grow arbitrarily large during 
	optimization, which may result in instability and divergence. In this 
	paper, we propose a simple and effective 
	regularization technique that stabilizes the iterates and yields meaningful 
	performance guarantees even if the domain 
	and the gradient noise scales linearly with the size of the iterates (and 
	is thus potentially unbounded). Besides 
	providing a set of general results, we also apply our algorithm to a 
	specific 
	problem in reinforcement 
	learning, where it leads to performance guarantees for finding near-optimal 
	policies in an average-reward MDP without 
	prior knowledge of the bias span.
\end{abstract}

\section{Introduction}\label{sec:intro}
We study the performance of stochastic optimization algorithms for solving convex-concave saddle-point problems of the 
form $\min_{\xvec\in\X}\max_{\yvec\in\Y}f(\xvec,\yvec)$. The algorithms we consider aim to approximate saddle points 
via running two stochastic convex optimization methods against each other, one aiming to minimize the objective function 
and the other aiming to maximize. Both players have access to noisy gradient evaluations at individual points 
$\xvec_t$ and $\yvec_t$ of the primal and dual domains $\X$ and $\Y$, and typically compute their updates via 
gradient-descent-like procedures. Due to the complicated interaction between the two concurrent procedures, it is 
notoriously difficult to ensure convergence of these methods towards the desired saddle points, and in fact even 
guaranteeing their stability is far from trivial. One common way to make sure that the iterates do not diverge is 
projecting them to bounded sets around the initial point. While this idea does the job, it gives rise to a dilemma: how 
should one pick the size of these constraint sets to make sure both that the optimum remains in there while keeping the 
optimization process reasonably efficient? In this paper, we propose a method 
that addresses this question and provides as good guarantees as the best known projection-based method, but without 
having to commit to a specific projection radius.

It is well known that simply running gradient descent for both the minimizing and maximizing players can easily result 
in divergence, even when having access to exact gradients without noise 
\citep{goodfellow2016nips,mertikopoulos2018optimistic}. While the average of the iterates 
may converge in such cases, their rate of convergence is typically affected by the magnitude of the gradients, which 
grows larger and larger as the iterates themselves diverge, thus resulting in arbitrarily slow convergence of the 
average. Numerous solutions have been proposed to this issue in the literature, most notably using some form of 
\emph{gradient extrapolation} 
\citep{korpelevich1976extragradient,popov1980modification,gidel2018variational,mertikopoulos2018optimistic}. When these 
methods 
have access to noiseless gradients and are run on smooth objectives, these methods are remarkably stable: they can be 
shown to converge monotonically towards their limit. That said, convergence of such methods in the stochastic case is 
much less well-understood, unless the iterates are projected to a compact set 
\citep{juditsky2011solving,gidel2018variational}, or the boundedness of the gradients ensured by other assumptions 
\citep{mishchenko2020revisiting,loizou2021stochastic,sadiev2023high}. Indeed, unless projections are employed, the 
iterates of one player may grow large, which can result in large gradients observed by the opposite player, which in 
turn may result in large iterates for the second player---which effects may eventually cascade and result in instability 
and divergence. Our main contribution is proposing a stabilization 
technique that 
eliminates the risk of divergence.

For the sake of exposition, let us consider the special case of bilinear objectives
\[
	f(\xvec,\yvec) = \xvec\transpose 
	\Mm\yvec + 
	\bvec\transpose\xvec - \cvec\transpose\yvec,
\]
and primal-dual stochastic gradient descent ascent starting from the initial 
point 
$\xvec_1 = 0$ and $\yvec = 0$ as a baseline:
\begin{align*}
\xvec_{t+1} &= \xvec_t - \eta  \wt{\gvec}_x(t)\\
\yvec_{t+1} &= \yvec_t - \eta  \wt{\gvec}_y(t),
\end{align*}
where $\wt{\gvec}_x(t) = \nabla_x f(\xvec_t,\yvec_t) + \xi_x(t)$ and $\wt{\gvec}_y(t) = 
\nabla_y f(\xvec_t,\yvec_t) + \xi_y(t)$ are potentially noisy 
and unbiased estimates of the gradients with respect to $\xvec$ and 
$\yvec$.
Using standard tools (that we will explain in detail below), the average of the first $T$ iterates produced by the 
above procedure can be shown to satisfy the following guarantee on the 
\emph{duality gap}:
\begin{align*}
 &G(\xvec^*,\yvec^*) = f\pa{\frac{1}{T}\sum_{t=1}^T \xvec_t, \yvec^*} - f\pa{\xvec^*,\frac{1}{T}\sum_{t=1}^T \yvec_t}
 \\
 &\,\le \frac{\twonorm{\xvec^*}^2 + \twonorm{\yvec^*}^2}{2\eta T} 
 + \frac{\eta}{2T} \sum_{t=1}^T 
{\EE{\twonorm{\wt{\gvec}_x(t)}^2 + \twonorm{\wt{\gvec}_y(t)}^2}}.
\end{align*}
If one can ensure that the gradient estimates $\wt{\gvec}_x(t)$ and $\wt{\gvec}_y(t)$ remain bounded by a constant $G > 
0$, one can 
set $\eta \sim 1/(G\sqrt{T})$ and obtain a convergence rate of 
order $\frac{G \pa{\twonorm{\xvec^*}^2 + 
\twonorm{\yvec^*}^2}}{\sqrt{T}}$. However, notice that \emph{there is no way to make sure that the gradients actually 
remain bounded} while executing the algorithm! Indeed, notice that 
$\nabla_x f(\xvec_t,\yvec_t) = \Mm \yvec_t + 
\bvec$, which grows large as $\yvec_t$ grows large. A natural idea  is to 
project the iterates to balls of 
respective sizes $D_x,\,D_y > 0$, which guarantees that the iterates and thus 
the gradients remain bounded. 
However, convergence to the saddle point $(\xvec^*,\yvec^*)$ is now only possible whenever the respective norms satisfy 
$\norm{\xvec^*} \le D_x$ and $\norm{\yvec^{*}}\le D_y$, otherwise the optimal 
solution is excluded from the 
feasible set. Unfortunately, in many applications, it is impossible to pick the 
constants $D_x$ and $D_y$ 
appropriately due of lack of prior knowledge of the solution norms. 

In this paper, we propose a method that guarantees upper bounds on the duality gap of the following form (when 
specialized to the setting described above):
\[
 G(\xvec^*,\yvec^*) = \OO\pa{\frac{\twonorm{\xvec^*-\xvec_1}^2 
+ \twonorm{\yvec^*-\yvec_1}^2 + 1}{\sqrt{T}}},
\]
where the big-O notation hides some problem-dependent constants related to the objective function $f$ (which will be 
made explicit in our main theorem). Notably, our method requires no prior knowledge of the norms 
$\twonorm{\xvec^*-\xvec_1}^2$ and $\twonorm{\yvec^*-\yvec_1}^2$ whatsoever, and in particular performs no projections 
to make sure that the iterates remain bounded, thus addressing the challenge 
outlined above. Furthermore, our 
guarantees continue to hold for potentially data-dependent comparators $(\xvec^*,\yvec^*)$, and even when the gradients 
are subject to \emph{multiplicative noise} that can scale with the magnitude of 
the gradient itself.  Our main 
technical tool is augmenting the objective with a well-chosen regularization term which 
allows us to eliminate the terms $\twonorm{\wt{\gvec}_x(t)}^2 + \twonorm{\wt{\gvec}_y(t)}^2$ appearing in the guarantee 
of 
standard primal-dual gradient descent, and replace them with an upper bound of the gradients of the objective evaluated 
at the initial point $(\xvec_1,\yvec_1)$. These bounds have the appealing 
property of being 
\emph{initialization-dependent}, in that they guarantee improved performance 
when we pick the initial points 
$(\xvec_1,\yvec_1)$ close to $(\xvec^*,\yvec^*)$. Besides the simple 
bilinear setting discussed above, we study a 
more general set of games that we call ``sub-bilinear'', and provide an algorithmic framework that comes with 
guarantees that express closeness to initialization in terms of general Bregman 
divergences. 

The initialization-adaptive nature of our bounds is similar to the guarantees proved by 
\citet{liu2022initialization}, who propose algorithms for the same setting that achieve an 
initialization-dependent convergence rate of the order $G(\xvec^*,\yvec^*) = 
\wt{\OO}\pa{\frac{\twonorm{\xvec^*-\xvec_1} 
+ \twonorm{\yvec^*-\yvec_1}}{\sqrt{T}}}$, where $\wt{\OO}(\cdot)$ hides 
polylogarithmic factors of $T$ and the comparator norms. While this guarantee may appear 
to be stronger than our result, it is only proved under the condition that all gradients remain 
bounded as $\twonorm{\gvec_x(t)}^2 \le 1$ and $\twonorm{\gvec_y(t)}^2\le 1$, which cannot be 
ensured in the unconstrained setting that we consider in the present work. 
Another closely related work (that was pointed to our attention after posting 
the initial version of this work 
online) is due to \citet{jacobsen2023unconstrained}, who proposed an algorithm for convex optimization that is able to 
deal with potentially unbounded gradients that satisfy a certain ``sub-quadratic'' growth condition. As their Section~3 
shows, their technique is directly extensible to saddle-point optimization in bilinear games and even our notion of 
sub-bilinear games. Even more curiously, their key technical idea is nearly identical to the one employed in our work. 
However, the objectives of their work were slightly different from ours in that they focused on the special case of 
Euclidean geometries and deterministic gradients. In contrast, our main results hold for stochastic gradients and 
stochastic comparator points $(\xvec^*,\yvec^*)$, and are expressed in terms of general Bregman divergences. We believe 
that their analysis can be adjusted to handle these extensions: while dealing with stochastic gradients seems easy, 
handling data-dependent comparator points and working with general Bregman divergences appears to be rather 
challenging in their framework, and could require significant changes to their algorithm and analysis. We nevertheless 
encourage the reader to credit \citet{jacobsen2023unconstrained} just as much as our work when referring to 
the regularization technique we study in this paper.


More broadly speaking, our work contributes to the line of work on 
\emph{parameter-free} optimization methods that are 
able to adapt to problem complexity without prior knowledge of the relevant problem parameters. In the context of 
online convex optimization (OCO), several effective parameter-free algorithms are known to achieve guarantees 
scaling optimally with the initialization error $\norm{\xvec^* - \xvec_1}$, without requiring prior knowledge 
thereof \citep{streeter2012no,orabona2013dimension,van2019user}. 
\citet{cutkosky2016online, cutkosky2019artificial,mhammedi2020lipschitz} 
improve these guarantees by providing 
initialization-adaptive bounds for OCO in unconstrained domains without prior knowledge of both the size of the domain 
or subgradients of the loss. 
One would think that this would make their method suitable for solving the problem we study in this paper---however, 
their bound depends on the maximum norm of the observed subgradients, which is problematic for the reasons we have 
discussed extensively above. 


Unconstrained saddle-point problems have many important applications. Perhaps the most well-known such applications are 
in optimizing dual representations of convex functions 
(\citealp{shalev2006convex}; 
\citealp[Section~5.2]{bubeck2015convex};\citealp{wang2018acceleration,wang2023no}).
 Our original 
motivation during the development of this work has been to develop primal-dual methods for solving average-reward Markov 
decision processes (MDPs): this problem can be formulated as a linear program with primal variables that are of unknown 
scale. In the simpler setting of discounted Markov decision processes, previous work has provided efficient planning 
methods based on saddle-point optimization \citep{MWang2017,jin2020efficiently,cheng2020reduction}. While in this 
simple setting the primal variables (called \emph{value functions} in this setting) are known to be uniformly bounded, 
this is not the case in the more challenging average-reward setting we consider here: in this case, the value functions 
can have arbitrarily large norm depending on the program structure. 
As it is well-known in the reinforcement-learning 
literature, estimating this parameter with only online sample access 
is as hard 
as solving the original 
problem, and learning optimal policies without 
its prior knowledge has been widely conjectured to be impossible 
\citep{bartlett09regal,fruit2018efficient,fruit2018near,zhang2019regret}. 
Using our techniques developed in the present 
paper, we make progress on this important problem by proposing a planning algorithm that is guaranteed to produce a 
near-optimal policy without having prior knowledge of the scale of the value functions after a polynomial number of 
queries made to a simulator of the environment.

\paragraph{Notations.}
For an integer $T$, we use $[T] = 1,2,\cdots,T$. We denote as
$\vec{1}$ the vector with all one entries in $\Rn^{m}$ and represent the 
positive orthant as $\Rn_{+}^{n}$. Let $\X\subseteq\Rn^{m}$ and $f : 
\X\rightarrow\R$ differentiable. For vectors $\xvec,\xvec'\in\X$ we define 
their inner product as $\iprod{\xvec}{\xvec'} = 
\sum_{i=1}^{m}\xvec_{i}\xvec'_{i}$ and the \emph{Bregman divergence} of $\xvec$ 
at $\xvec'$ induced by $f : \X\rightarrow\R$ as 
\[
\DDx{\xvec}{\xvec'} = f(\xvec) - f(\xvec') - \iprod{\nabla f(\xvec')}{\xvec - 
\xvec'}.
\]

\section{Preliminaries}\label{sec:prelim}
We now formally define our problem setup and objectives. First, we recall some standard definitions.
\begin{definition}\label{def:convx_concv}(\textbf{convex-concave function})
	Let $\X\subseteq\Rn^{m}, \Y\subseteq\Rn^{n}$ be convex 
	sets. A function $f : \X\times \Y \rightarrow \Rn$ is said to be 	
	\emph{convex-concave} if it is convex in the first argument and concave in 
	the second. That is, $f$ is convex-concave if for any $\xvec,\xvec'\in\X$, 
	$\yvec,\yvec'\in\Y$ and 
	$\lambda\in[0,1]$, we have
	\begin{equation*}
		f\pa{\lambda\xvec + (1-\lambda)\xvec',\yvec}
		\leq \lambda f\pa{\xvec,\yvec} + (1-\lambda)f\pa{\xvec',\yvec},
	\end{equation*}
	and 
	\begin{equation*}
		f\pa{\xvec,\lambda\yvec + (1-\lambda)\yvec'}
		\geq \lambda f\pa{\xvec,\yvec} + (1-\lambda)f\pa{\xvec,\yvec'}.
	\end{equation*}
\end{definition}
\begin{definition}\label{def:subgrad}(\textbf{subgradient and subdifferential})
	Let $\X^{*}$ denote the dual space of $\X$. For a function $h : \X 
	\rightarrow \Rn$, $\vec{g}\in\X^{*}$ is a 
	subgradient of $h$ at $\xvec\in\X$ 	if for all $\xvec'\in\X$,
	\begin{equation*}
		h(\xvec) - h(\xvec') \leq \iprod{\vec{g}}{\xvec - \xvec'}.
	\end{equation*}
	The set of all subgradients of a function $h$ at $\xvec$ is called the \emph{subdifferential}, and is denoted 
by $\partial h(\xvec)$.
\end{definition}
We recall that when $h$ is convex and differentiable, then $\nabla h(\xvec) \in \partial h(\xvec)$ holds for all 
$\xvec\in\X$, and additionally $\partial h(\xvec) = \ev{\nabla h(\xvec)}$ holds whenever $\xvec$ is in the interior of 
the domain $\X$.
\begin{definition}\label{def:scovx}(\textbf{strong convexity}) 
	For $\gamma\geq 0$, a function $h : \X \rightarrow \Rn$ is 
	$\gamma$-strongly convex with respect to the norm $\norm{\cdot}$ if and only if for 
	all $\xvec,\xvec'\in\dom{h}$, $\vec{g}\in\partial h(\xvec)$:
	\begin{equation*}
		h(\xvec') - h(\xvec) \geq \iprod{\vec{g}}{\xvec' - \xvec} + 
		\frac{\gamma}{2}\norm{\xvec' - \xvec}^{2}.
	\end{equation*}
\end{definition}

We consider the problem of finding (approximate) saddle points of convex-concave functions on the potentially unbounded 
convex domains $\X\times\Y \subseteq \real^m\times \real^n$:
\begin{equation}\label{eq:min_max_prb}
\inf_{\xvec\in\X}\sup_{\yvec\in\Y}f(\xvec,\yvec),
\end{equation}
where $f : \X\times \Y \rightarrow \Rn$ is assumed to be convex-concave in the 
sense of
\cref{def:convx_concv}. We focus on the classic stochastic first-order oracle 
model where algorithms can only access 
noisy estimates of the subgradients at individual points in $\X\times\Y$. Specifically, we will consider incremental 
algorithms that produce a sequence of iterates $\pa{\xvec_t,\yvec_t}_{t=1}^T$ by running two concurrent online learning 
methods for choosing the two sequences $\ev{\xvec_t}_{t=1}^{T}$ and 
$\ev{\yvec_t}_{t=1}^T$. The algorithm picking the 
sequence $\ev{\xvec_t}_{t=1}^{T}$ aims to minimize the sequence of losses 
$\ev{f(\cdot,\yvec_t)}_{t=1}^T$ and is 
referred to as the \emph{min player}, and the algorithm picking 
$\ev{\yvec_t}_{t=1}^{T}$ that aims to minimize 
$\ev{-f(\xvec_t,\cdot)}_{t=1}^T$ is called the \emph{max player}. In each round 
$t$, the two players have access to a 
stochastic first-order oracle that provides the following noisy estimates of a pair of subgradients $\gx 
\in \partial_{x} f(\xvec_t,\yvec_t)$ and $\gy \in -\partial_{y} \pa{- f(\xvec_t,\yvec_t)}$, with 
the noisy estimates written as \looseness=-1 
\begin{align*}
	\gtx &= 
	\gx + \xi_{x}(t)\\
	\gty &= 
	\gy + \xi_{y}(t).
\end{align*}
Here, $\xivec_{x}(t)\in\real^m$ and $\xivec_{y}(t)\in\real^n$ are zero-mean noise vectors generated in round $t\in[T]$ 
from some unknown distributions, independently of the interaction history $\F_{t-1}$.  Using the 
notation $\EEt{\xvec_{t}} = \EEc{\xvec_{t}}{\F_{t-1}} = \xvec_{t}$ to denote expectations conditioned on 
the history of observations up to the end of time $t$, we can write the above conditions as $\EEt{\gtx} = \gx$ and 
$\EEt{\gty} = \gy$. Note that when the objective is differentiable we can simply set $\gx = \nabla_{x} 
f(\xvec_{t},\yvec_t)$ and $\gy = \nabla_{y} f(\xvec_{t},\yvec_t)$. 

In a good part of this work, we focus on the important class of \emph{bilinear} objective functions that take the 
following 
form:
\[
 f(\xvec,\yvec) = \xvec\transpose \Mm\yvec + \bvec\transpose\xvec - \cvec\transpose\yvec.
\]
Here, $\Mm\in\real^{m\times n}$, $\bvec\in\real^m$ and $\cvec\in\real^n$ and the domains for the optimization 
variables are $\X = \real^m$ and $\Y = \real^n$. This objective is clearly differentiable, and its gradients with 
respect to $\xt$ and $\yt$ are given as $\gx = \Mm\yvec_t + \bvec$ and $\gy =\Mm\transpose\xvec_t - \cvec$. In the 
context 
of bilinear games, we will consider a natural noise model where in each round $t$, we have access to noisy versions of 
the matrices and vectors necessary for computing the gradients. Specifically, we have $\wh{\Mm}(t) = \Mm + 
\xivec_{\Mm}(t)$, $\wh{\bvec}(t) = \bvec + \xivec_{b}(t)$, and $\wh{\cvec}(t) = \cvec + \xivec_{c}(t)$ 
where $\xivec_{\Mm}(t), \xivec_{b}(t), \xivec_{c}(t)$ are $i.i.d$, zero-mean 
random matrices and vectors generated from unknown distributions. We then use these observations to build the following 
estimators for the gradients:
\begin{align*}
	\gtx
	&= \wh{\Mm}(t)\yvec_{t} + \wh{\bvec}(t)\\
	\gty
	&= \wh{\Mm}(t)\transpose \xvec_{t} - 
	\wh{\cvec}(t).
\end{align*}
This fits into the generic noise model defined earlier with $\xivec_{x}(t) = \xivec_{\Mm}(t)\yt + \xivec_{b}(t)$ and 
$\xivec_{y}(t) = \xivec_{M}(t)\transpose \xt - \xivec_{c}(t)$. Regarding the magnitude of the noise, we will make the 
assumption that there exists constants $L_M$, $L_b$ and $L_c$ such that 
$\mathbb{E}_t\bigl[\|\wh{\bvec_t}\|^2\bigr] \le L_b$, 
$\mathbb{E}_t\bigl[\twonorm{\wh{\cvec}_t}^2\bigr] \le L_c$, and 
\begin{align*}
\EEt{\bigl\|\wh{\Mm}(t)\yvec\bigr\|_2^2} &\leq 		
L_M^2{\sqtwonorm{\yvec}}
\\
\EEt{\bigl\|\wh{\Mm}(t)\transpose\xvec\|_2^2} &\leq 
L_M^2{\sqtwonorm{\xvec}},
\end{align*}
holds for all $\xvec,\yvec\in\X\times\Y$. Note that this latter assumption is satisfied whenever the operator norm of 
each
$\wh{\Mm}(t)$ is upper bounded by $L_M$ with probability one.

This noise model is often more realistic than simply 
assuming that $\xivec_{x}(t)$ and $\xivec_{y}(t)$ have uniformly bounded norms, and is much more challenging to 
work with: notably, these noise variables scale with the iterates $\xt$ and $\yt$, and may thus grow uncontrollably as 
the iterates grow large. In particular, the noise is precisely of this form in our application to reinforcement 
learning presented in Section~\ref{sec:RL}.


	The final output of the algorithm will be denoted as $(\oxvec_T,\oyvec_T)$, 
	and due to noise in the gradients, 
	its quality will be measured in terms of the \emph{duality gap}, defined 
	with respect to a comparator point 
	$(\xvec^*,\yvec^*)$ as
	\[
	G(\xvec^{*}, \yvec^{*}) = f(\oxvec_T,\yvec^{*}) - 
	f(\xvec^{*},\oyvec_T).
	\]
	We will allow the comparator points $\xvec^*,\yvec^*$ to depend on the 
	entire interaction history $\F_T$, and in 
	particular allow choices such as $(\xvec^{*}, \yvec^{*}) = 
	\argmax_{\xvec,\yvec \in \mathcal{S}} G(\xvec,\yvec)$ for 
	arbitrary bounded sets $\Sw \subseteq \X\times\Y$. The gap function 
	evaluated at this point is often called a 
	\emph{merit function}, which measures progress towards finding an optimal 
	solution to the min-max optimization problem 
	in question. Another typical choice for comparator point is a saddle point 
	of $f$ that satisfies the inequalities
	\begin{equation}\label{eq:saddlepoint}
		f(\xvec^{*},\yvec)\leq 
		f(\xvec^{*},\yvec^{*})\leq f(\xvec,\yvec^{*})
	\end{equation}
	for all $\xvec\in\X, \yvec\in\Y$. In the rest of the paper, we will use 
	$\xvec^*,\yvec^*$ to refer more generally to 
	any potentially random pair of comparator points, and will call such 
	comparator choices \emph{adaptive}.
	We provide an example that makes use of adaptive comparators in 
	Section~\ref{sec:RL} in the context of reinforcement 
	learning. 
\section{Algorithm and main results}\label{sec:reg}
We now present our algorithmic approach and provide its performance guarantees. For didactic purposes, we start with 
the special case of bilinear games and Euclidean geometries, and then later provide an extension to 
sub-bilinear objectives and more general geometries in Section~\ref{sec:sublinear}.

\subsection{Unconstrained bilinear games}
As a gentle start, we first describe our approach for bilinear games as defined in Section~\ref{sec:prelim} where the 
domains are $\X = \real^m$ and $\Y = \real^n$, and distances are measured in terms of the Euclidean distances in the 
respective spaces. For this case, the core idea of our approach is to run 
stochastic gradient descent/ascent to 
compute the iterates of the two players. As discussed before, this procedure may diverge and produce large gradients 
when run on the original objective, unless the iterates are projected to a bounded set. Our key idea is to replace the 
projection set with an appropriately chosen regularization term added to the objective. Precisely, we introduce the 
regularization functions $\HHx(\xvec) = \frac 12 \twonorm{\xvec - \xvec_1}^2$ and $\HHy(\yvec) = \frac 
12 \twonorm{\yvec - \yvec_1}^2$, and define our algorithm via the following recursive updates:
\begin{equation*}\label{eq:update}
\begin{split}
 \xvec_{t+1} &\!=\! \argmin_{\xvec\in\real^m}\!\ev{\iprod{\xvec}{\gtx} + \varrho_x \HHx(\xvec) + \frac{1}{\eta_x} 
\twonorm{\xvec - \xvec_t}^2}\\
 \yvec_{t+1} &\!=\! 
 \argmin_{\yvec\in\real^n}\!\ev{\!-\!\iprod{\yvec}{\gty}\!+\! 
 \varrho_y \HHy(\yvec) + \frac{1}{\eta_y} 
\twonorm{\yvec - \yvec_t}^2}.
\end{split}
\end{equation*}
For each player, the update rules can be recognized as an instance of Composite Objective MIrror Descent (\COMID, 
\citealp{duchi2010composite}), and accordingly we refer to the resulting algorithm as Composite Objective Gradient 
Descent-Ascent (\COGDA). The updates can be written in closed form as 
\begin{equation}\label{eq:update-closedform}
\begin{split}
 \xvec_{t+1} &= \frac{\xvec_t - \eta_x \gtx}{1+\varrho_x \eta_x} + 
 \frac{\varrho_x \eta_x \xvec_1}{1+\varrho_x 
\eta_x}\\
 \yvec_{t+1} &= \frac{\yvec_t + \eta_y \gty}{1+\varrho_y \eta_y} + 
 \frac{\varrho_y \eta_y \yvec_1}{1+\varrho_y 
\eta_y}.
\end{split}
\end{equation}
This expression has a clear intuitive interpretation: for the min-player, it is a convex combination of the standard 
SGD update $\xvec_t - \eta_x \gtx$ and the initial point $\xvec_1$, with 
weights that depend on the regularization 
parameter $\varrho_x$. Setting $\varrho_x = 0$ recovers the standard SGD update and makes the algorithm vulnerable to 
divergence issues. The overall method closely resembles the \emph{stabilized online mirror descent} method of 
\citet{fang2022online}, and we will accordingly refer to the effect of the newly introduced regularization term as 
\emph{stabilization}.

After running the above iterations for $T$ steps, the algorithm outputs $\overline{\xvec}_T = \frac{1}{T} \sum_{t=1}^T 
\xvec_t$ and $\overline{\yvec}_T = \frac{1}{T} \sum_{t=1}^T \yvec_t$.
The following theorem is our main result regarding the performance of this algorithm.
\begin{theorem}\label{thm:res1}
Let $\varrho_{y}=4\eta_{x}L_M^{2}$ and $\varrho_{x}=4\eta_{y}L_M^{2}$. 
Then, the duality gap achieved by \COGDA satisfies the following bound against 
any adaptive comparator $(\xvec^*,\yvec^*)\in\real^m\times\real^n$:\looseness=-1
	\begin{align*}
		\EE{G(\xvec^{*};\yvec^{*})}
		&\leq
		\pa{\frac{1}{\eta_{y}T} + 2\eta_{x}L_M^{2}}
		\EE{\sqtwonorm{\yvec^{*} - \yvec_{1}}}\\
		&\quad
		+ \pa{\frac{1}{\eta_{x}T} + 2\eta_{y}L_M^{2}}
		\EE{\sqtwonorm{\xvec^{*} - \xvec_{1}}}\\
		&\quad
		+\frac{2\eta_{y}}{T}\sum_{t=1}^{T}
		\EE{\sqtwonorm{\wh{\Mm}(t)\transpose\xvec_{1} - \hat{\cvec}(t)}}\\
		&\quad
		+ \frac{2\eta_{x}}{T}\sum_{t=1}^{T}
		\EE{\sqtwonorm{\wh{\Mm}(t)\yvec_{1}  +\hat{\bvec}(t)}}.
	\end{align*}
In particular, setting $\xvec_1 = 0$ and $\yvec_1 = 0$ and $\eta_x = 1/{L_M
\sqrt{2T}}$ and 
$\eta_y = 1/{L_M\sqrt{2T}}$,
the duality gap is upper bounded as
\begin{align*}
		\EE{G(\xvec^{*},\yvec^{*})}
		&\!=\! \OO\pa{\frac{L_M^2\EE{\twonorm{\yvec^*}^2 + \twonorm{\xvec^*}^2} 
		+ 
		L_b^2 + L_c^2}{L_M\sqrt{T}}}.
	\end{align*}
\end{theorem}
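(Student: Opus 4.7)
The plan is to convert the duality gap into the sum of two regret-like quantities (one per player), apply the standard \COMID regret bound to each, and then exploit a carefully engineered cancellation between the two bounds to eliminate the iterate-dependent gradient-variance terms. First, by convexity of $f$ in $\xvec$ and concavity in $\yvec$ together with the (sub)gradient inequality, the instantaneous gap $f(\xvec_t, \yvec^*) - f(\xvec^*, \yvec_t)$ is upper bounded by $\iprod{\gx}{\xvec_t - \xvec^*} - \iprod{\gy}{\yvec_t - \yvec^*}$. Averaging over $t$ and using convex-concavity once more (to pull the average inside $f$), together with unbiasedness $\EEt{\gtx} = \gx$, $\EEt{\gty} = \gy$ combined with the $\F_{t-1}$-measurability of $(\xvec_t, \yvec_t)$, yields
\begin{equation*}
\EE{G(\xvec^*, \yvec^*)} \leq \frac{1}{T}\EE{\sum_{t=1}^T \iprod{\gtx}{\xvec_t - \xvec^*} - \iprod{\gty}{\yvec_t - \yvec^*}}.
\end{equation*}

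Next, treating $\varrho_x \HHx$ as a composite regularizer with linear loss $\iprod{\cdot}{\gtx}$, the standard \COMID regret bound (\citealp{duchi2010composite}) gives for the min player
\begin{equation*}
\sum_{t=1}^T \iprod{\gtx}{\xvec_t - \xvec^*} \leq \frac{\sqtwonorm{\xvec^* - \xvec_1}}{2\eta_x} + \frac{\eta_x}{2}\sum_{t=1}^T \sqtwonorm{\gtx} + T \varrho_x \HHx(\xvec^*) - \varrho_x \sum_{t=1}^T \HHx(\xvec_{t+1}),
\end{equation*}
and an analogous bound holds for the max player. With the choices $\varrho_x = 2\eta_y L_M^2$ and $\varrho_y = 2\eta_x L_M^2$, the terms $T \varrho_x \HHx(\xvec^*)$ and $T \varrho_y \HHy(\yvec^*)$ produce, after division by $T$, precisely the $\eta_y L_M^2 \sqtwonorm{\xvec^* - \xvec_1}$ and $\eta_x L_M^2 \sqtwonorm{\yvec^* - \yvec_1}$ contributions appearing in the theorem.

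The heart of the argument is controlling $\sqtwonorm{\gtx}$. Decomposing $\gtx = \wh{\Mm}(t)(\yvec_t - \yvec_1) + \wh{\Mm}(t)\yvec_1 + \wh{\bvec}(t)$ and using $\twonorm{a+b}^2 \leq 2\twonorm{a}^2 + 2\twonorm{b}^2$ together with the operator-norm assumption on $\wh{\Mm}(t)$, I obtain
\begin{equation*}
\EE{\sqtwonorm{\gtx}} \leq 4 L_M^2 \EE{\HHy(\yvec_t)} + 2\EE{\sqtwonorm{\wh{\Mm}(t)\yvec_1 + \wh{\bvec}(t)}}.
\end{equation*}
Scaling by $\eta_x/2$ and summing, the iterate-dependent piece contributes $2\eta_x L_M^2 \sum_t \EE{\HHy(\yvec_t)} = \varrho_y \sum_t \EE{\HHy(\yvec_t)}$, which cancels against the negative term $-\varrho_y \sum_t \HHy(\yvec_{t+1})$ arising in the max player's \COMID bound. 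The telescoping mismatch between $\sum_t \HHy(\yvec_t)$ and $\sum_t \HHy(\yvec_{t+1})$ equals $\HHy(\yvec_1) - \HHy(\yvec_{T+1}) = -\HHy(\yvec_{T+1}) \leq 0$ and is safely discarded. The symmetric argument handles the min-player side. What survives are exactly the four terms claimed in the theorem, and the final specialization with $\xvec_1 = \yvec_1 = 0$ and $\eta_x = \eta_y = 1/(L_M \sqrt{T})$, together with the noise bounds $L_b$ and $L_c$, yields the $\OO(1/\sqrt{T})$ rate.

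The main technical subtlety I expect is this cross-player cancellation: the stabilization must be strong enough to absorb the iterate-dependent gradient variance from the opposing player, yet not so aggressive that the $T \varrho_x \HHx(\xvec^*)$ and $T \varrho_y \HHy(\yvec^*)$ terms become non-vanishing. The precise coupling $\varrho_x \propto \eta_y$, $\varrho_y \propto \eta_x$ is exactly what makes the constants line up on both sides, delivering a bound whose scaling depends only on the \emph{initialization error} rather than any a priori bound on the iterates---which is what distinguishes \COGDA from plain projection-free primal-dual SGD.
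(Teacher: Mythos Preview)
Your proposal is correct and follows essentially the same approach as the paper. The only organizational difference is that the paper introduces the regularized objective $\freg(\xvec,\yvec) = f(\xvec,\yvec) + \varrho_x\HHx(\xvec) - \varrho_y\HHy(\yvec)$ and decomposes the gap as $G = G^{\text{reg}} + \text{(regularization terms)}$, so the negative $-\varrho_y\sum_t\HHy(\yvec_t)$ and $-\varrho_x\sum_t\HHx(\xvec_t)$ contributions appear from that algebraic split (at the \emph{same} time index $t$), whereas you pull them out directly from the \COMID composite bound as $-\varrho_y\sum_t\HHy(\yvec_{t+1})$ and close the one-step index mismatch via telescoping; both routes implement the identical cross-player cancellation and yield the stated constants.
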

We stress that this bound holds against arbitrary data-dependent choices of the 
comparator points 
$(\xvec^*,\yvec^*)$, and in particular also against the choice $(\xvec^*,\yvec^*) = \argmax_{\xvec,\yvec \in 
\mathcal{S}} G(\xvec,\yvec)$ for any bounded set $\mathcal{S}\subset \real^n\times\real^m$, recovering a standard notion 
of merit function studied in the context of saddle-point optimization 
\citep{nemirovski2009robust}.
It is insightful to compare this bound side by side with the one we would get by running primal-dual stochastic 
gradient without regularization. By standard arguments (see, e.g., 
\citealp{Zin03,nemirovski2009robust}), the following bound is easy to prove:
\begin{align*}
 \EE{G(\xvec^{*};\yvec^{*})} \le& \frac{\EE{\sqtwonorm{\xvec^{*} - 
 \xvec_{1}}}}{\eta_x T} + \frac{\EE{\sqtwonorm{\yvec^{*} - 
\yvec_{1}}}}{\eta_y T} 
\\
& + \frac{\eta_{x}}{T}\sum_{t=1}^{T}
		\EE{\sqtwonorm{\wh{\Mm}(t)\yvec_{t}  +\wh{\bvec}(t)}}
\\
& +\frac{\eta_{y}}{T}\sum_{t=1}^{T}		\EE{\sqtwonorm{\wh{\Mm}(t)\transpose\xvec_{t} - \wh{\cvec}(t)}}.
\end{align*}
A major problem with this bound is that it features the squared stochastic gradient norms evaluated at $\xvec_t$ and 
$\yvec_t$, which are generally unbounded, which makes this guarantee void of meaning without projecting the updates. 
Our own guarantee stated above replaces these gradient norms with the norms of the gradients evaluated at \emph{the 
initial point} $\xvec_1,\yvec_1$, which is \emph{always bounded} irrespective of how large the actual iterates 
$\xvec_t,\yvec_t$ get.

At first, it may seem surprising that such an improvement is possible to achieve by such a simple regularization trick. 
To provide some insight about how regularization helps us achieve our goal, we provide the brief proof sketch
of the above statement here (and defer the full proof to Appendix~\ref{appx:res1}).
\begin{proof}[Proof sketch of Theorem~\ref{thm:res1}.]
 Consider $(\xvec^*,\yvec^*)\in\real^m\times\real^n$. As the first step, 
 we 
 introduce the notation 
 $\freg\pa{\xvec,\yvec} = f\pa{\xvec,\yvec} + \frac{\varrho_{x}}{2} 
 \twonorm{\xvec - \xvec_1}^2 - \frac{\varrho_{y}}{2} \twonorm{\yvec - 
 	\yvec_1}^2 $ and rewrite the expected duality gap as 
 \begin{align*}
 	\EE{G(\xvec^{*};\yvec^{*})}
 	&\leq \frac{1}{T}\sum_{t=1}^{T}
 	\EE{\freg\pa{\xvec_{t},\yvec^{*}} - \freg\pa{\xvec^{*},\yvec_{t}}}\\
 	&\quad
 	+\frac{\varrho_x}{2T}\sum_{t=1}^{T}\EE{\sqtwonorm{\xvec^{*} - \xvec_{1}} - 
 	\sqtwonorm{\xvec_{t} - \xvec_{1}}}\\
 	&\quad
 	 	+\frac{\varrho_y}{2T}\sum_{t=1}^{T}\EE{\sqtwonorm{\yvec^{*} - \yvec_{1}} - 
 	\sqtwonorm{\yvec_{t} - \yvec_{1}}}.
 \end{align*}
 The first term in this decomposition then can be further written as the sum of \emph{regrets} of the min and the max 
players:
 \begin{align*}
 	&\frac{1}{T}\sum_{t=1}^{T}\EE{\freg\pa{\xvec_{t},\yvec^{*}} - 
 	\freg\pa{\xvec^{*},\yvec_{t}}}\\
 	&\qquad=
 	\frac{1}{T}\sum_{t=1}^{T}
 	\EE{\freg\pa{\xvec_{t},\yvec^{*}} - \freg\pa{\xvec_{t},\yvec_{t}}}\\
 	&\qquad\quad
 	+ \frac{1}{T}\sum_{t=1}^{T}
 	\EE{\freg\pa{\xvec_{t},\yvec_{t}}- \freg\pa{\xvec^{*},\yvec_{t}}}
 \end{align*}
These terms can then be controlled via the standard regret analysis of \COMID 
due to \citet{duchi2010composite}, and an additional ``ghost-iterate'' trick to 
account for adaptive comparators (see Section~3 of 
\citealp{nemirovski2009robust}). In 
particular, a few lines of calculations (along the lines of the online gradient descent analysis of 
\citealp{Zin03}) yield the following bound on the sum of the two regrets:
\begin{align*}
 	&\frac 1T \sum_{t=1}^{T}\EE{\freg\pa{\xvec_{t},\yvec^{*}} - 
 	\freg\pa{\xvec^{*},\yvec_{t}}}\\
 	&\qquad\leq \frac{\EE{\sqtwonorm{\yvec^{*} - \yvec_{1}}}}{\eta_{y}T}
 	+\frac{\eta_{y}}{T}\sum_{t=1}^{T}\EE{\sqtwonorm{\gty}}\\
 	&\qquad\quad+ 
 	\frac{\EE{\sqtwonorm{\xvec^{*} - \xvec_{1}}}}{\eta_{x}T}
 	+ \frac{\eta_{x}}{T}\sum_{t=1}^{T}\EE{\sqtwonorm{\gtx}}
\end{align*}
Recalling the form of the gradient estimators, we note that 
\begin{align*}
 &\EE{\sqtwonorm{\gtx}} = \EE{\sqtwonorm{\wh{\Mm}(t)\yvec_{t}  +\hat{\bvec}(t)}}
 \\
 &\,\le 2\EE{\sqtwonorm{\wh{\Mm}(t)\pa{\yvec_{t}  - \yvec_1}}}
+ 2\EE{\sqtwonorm{\wh{\Mm}(t)\yvec_{1}  +\hat{\bvec}(t)}}
  \\
  &\,\le 2L_M^2 \EE{\sqtwonorm{\yvec_{t}  - \yvec_1}}
  + 2\EE{\sqtwonorm{\wh{\Mm}(t)\yvec_{1}  +\hat{\bvec}(t)}},
\end{align*}
and a similar bound can be shown for the norm of $\gty$ as well.
Putting the above bounds together and setting $\varrho_y \ge 4L_M \eta_x$ and 
$\varrho_x \ge 4L_M \eta_y$ gives the 
result.
\end{proof}

As can be seen from the proof sketch above, the role of the additional regularization term for the $x$-player is to 
eliminate the gradient norms appearing in the regret bound of the $y$-player. This effect kicks in once the 
regularization parameter $\varrho_x$ becomes large enough, so that the corresponding negative term in the regret bound 
of the first player can overpower the positive term appearing on the bound of the opposite player. The same story 
applies to the second player. Note that while the regularization pulls the iterates closer to the initial point 
$\xvec_1,\yvec_1$, it does not explicitly guarantee that they remain uniformly bounded at all times $t$, and in fact 
such claim seems impossible to show in general due to the noise in the gradient estimates. Remarkably, the analysis 
above works seamlessly for noisy gradient estimates, even though the gradient noise can grow proportionally with the 
size of the iterates. Another technical challenge that the analysis needs to 
address is the potential 
data-dependence of the comparators $(\xvec^*,\yvec^*)$, which introduces a potential bias into the estimates of the gap 
function $G(\xvec^*,\yvec^*)$. We handle this bias by adapting an elegant ``ghost-iterate'' trick of 
\citet{nemirovski2009robust}, which we regard as an application of the technique of \citet{rakhlin2017equivalence} for 
controlling suprema of a collection of martingales via online learning.

\subsection{Sub-bilinear games and general divergences}\label{sec:sublinear}
After setting the stage in the previous section, we are now ready to introduce our method in its full generality. 
Specifically, we are going to consider a somewhat broader class of objective functions, and provide mirror-descent 
style performance guarantees that measure distances in terms of Bregman divergences. We are going to take inspiration 
from Theorem~\ref{thm:res1} and its proof we have just presented: in short, the idea is to add appropriate 
regularization terms to the objective that will cancel some otherwise large positive terms in the regret analyses of the 
two players. The choice of the regularization terms will be somewhat more involved in this case, and will require 
taking the structure of the objective function into account.

We will let $\omega_x: \X\ra\real$ and $\omega_y:\Y \ra \real$ be two convex functions, to be called 
the \emph{distance-generating functions} over $\X$ and $\Y$. We suppose that $\omega_x$ is $\gamma_x$-strongly convex 
with respect to the norm $\norm{\cdot}_x$ and similarly that $\omega_y$ is $\gamma_y$-strongly convex 
with respect to $\norm{\cdot}_y$. We will respectively denote the Bregman divergences induced by $\omega_x$ and 
$\omega_y$ as $\DDx{\cdot}{\cdot}$ and $\DDy{\cdot}{\cdot}$. We will assume 
that the objective function satisfies 
the following condition:
\begin{definition}\label{def:sub-bilinear}(\textbf{sub-bilinear function})
	A convex-concave function $f : \X\times \Y \rightarrow \Rn$ is said to be 
	$l$-\emph{sub-bilinear} for some $l > 0$ with respect to the norms 
	$\norm{\cdot}_x$ and $\norm{\cdot}_y$, if its 
subgradients $\gvec_x \in \partial_x f(\xvec,\yvec)$ and $\gvec_y \in \partial_y f(\xvec,\yvec)$ satisfy the following 
conditions for all $\xvec,\yvec$:
\begin{align*}
 \norm{\gvec_x}_{x,*}^2 &\le l^2\pa{\norm{\yvec}_{x,*}^2 + 1},
 \\
 \norm{\gvec_y}_{y,*}^2 &\le l^2\pa{\norm{\xvec}_{y,*}^2 + 1}.
\end{align*}
\end{definition}
This condition effectively states that, for a fixed $\yvec$ (resp.~$\xvec$), the objective function 
$f(\xvec,\yvec)$ is Lipschitz with respect to $\xvec$ (resp.~$\yvec$) with a constant that grows at most as fast as 
$\norm{\yvec}_{x,*}$ (resp.~$\norm{\xvec}_{y,*}$). Put differently, it means that $f$ behaves like a bilinear function
asymptotically as one approaches infinity in each direction, which justifies the name ``sub-bilinear'' (mirroring the 
notion of ``sublinearity'' or ``subadditivity'' in convex analysis, cf.~\citealp{HL01}, Section~C.1). We 
will further suppose that the stochastic gradients themselves satisfy the 
following conditions for some $L > 0$:
\begin{equation}\label{eq:noise_condition}
\begin{split}
 \EEt{\norm{\gtx}_{x,*}}^2 &\le L^2\pa{\norm{\yvec_{t} - \yvec_1}_{x,*}^2 + 1},
 \\
 \EEt{\norm{\gty}_{y,*}}^2 &\le L^2\pa{\norm{\xvec_{t} - \xvec_1}_{y,*}^2 + 1}.
\end{split}
\end{equation}
Supposing that the condition holds with norms respectively centered at $\xvec_1$ and $\yvec_1$ is without loss of 
generality, and in particular one can always verify 
$l^2\bpa{\norm{\xvec}_{y,*}^2 + 1} \le L^2\bpa{\norm{\xvec - 
\xvec_1}_{y,*}^2 + 1}$ at the price of replacing $l$ by a larger factor $L$ 
that may depend on $\norm{\xvec_1}_{y,*}$.

For this setting, our algorithm is an adaptation of \emph{composite-objective mirror descent} (\COMID, 
\citealp{duchi2010composite}), which itself is an adaptation of the classic mirror descent method of 
\citet{nemirovskij1983problem,beck2003mirror}, variants of which have been used broadly since the early days of 
numerical optimization \citep{R76,Martinet1970,Martinet1978}. In particular, we introduce the additional regularization 
functions $\HHx: \X\ra \real$ and $\HHy: \Y\ra \real$ defined respectively for each $\xvec$ and $\yvec$ as 
$\HHx(\xvec) = {\frac{1}{2}}\norm{\xvec - \xvec_1}^2_{y,*}$ and 
$\HHy(\yvec) = {\frac{1}{2}}\norm{\yvec - \yvec_1}^2_{x,*}$, and use these 
as 
additional regularization terms to calculate the following sequence of updates in each round $t=1,2,\dots,T$:
\begin{align*}
 \xvec_{t+1}
		&\!=\! \argmin_{\xvec\in\X}\ev{\iprod{\xvec}{\gtx}
			+ \varrho_x \HHx(\xvec) + \frac{1}{\eta_{x}}\DDx{\xvec}{\xvec_{t}}}
			\\
\yvec_{t+1}
		&\!=\! \argmax_{\yvec\in\Y}\!\ev{\iprod{\yvec}{\gty}
			- \varrho_y \HHy(\yvec) - \frac{1}{\eta_{y}}\DDy{\yvec}{\yvec_{t}}},
\end{align*}
We refer to this algorithm as Composite-Objective Mirror Descent-Ascent (\COMIDA), and provide our main result 
regarding its performance.

\begin{theorem}\label{thm:res2}
	Suppose that $f$ is sub-bilinear and the stochastic gradients satisfy the 
	conditions in 
	Equation~\eqref{eq:noise_condition}. Letting $\varrho_x = \frac{2\eta_y 
		L^{2}}{\gamma_y}$ and 
	$\varrho_y = \frac{2\eta_x L^{2}}{\gamma_x}$, and 
	$\xvec^*,\yvec^*$ be arbitrary adaptive comparator points, the expected 
	duality gap 
	of \COMIDA satisfies the following bound:
	\begin{align*}
		\EE{G(\xvec^{*};\yvec^{*})}
		&\leq
		\frac{2\EE{\DDy{\yvec^{*}}{\yvec_{1}}}}{\eta_y T}
		+ \frac{\varrho_y}{2}\EE{\norm{\yvec^{*} - \yvec_1}^2_{x,*}}
		\\
		&\quad
		+ \frac{2\EE{\DDx{\xvec^{*}}{\xvec_{1}}}}{\eta_x T}
		+ \frac{\varrho_x}{2}\EE{\norm{\xvec^{*} - \xvec_1}^2_{y,*}}\\
		&\quad
		+ L^2\pa{\frac{\eta_y}{\gamma_y } + \frac{\eta_x}{\gamma_x}}.
	\end{align*}
\end{theorem}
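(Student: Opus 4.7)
The plan is to mirror the argument sketched for Theorem~\ref{thm:res1}, working with the \emph{regularized} objective
$\freg(\xvec,\yvec) = f(\xvec,\yvec) + \varrho_x \HHx(\xvec) - \varrho_y \HHy(\yvec)$, which remains convex-concave since $\HHx$ and $\HHy$ are convex and enter with the correct signs. First I would apply convex-concavity of $f$ together with Jensen's inequality to the output averages to reduce the duality gap to $G(\xvec^*,\yvec^*) \le \tfrac{1}{T}\sum_t [f(\xvec_t,\yvec^*) - f(\xvec^*,\yvec_t)]$, and then substitute $f = \freg - \varrho_x\HHx + \varrho_y\HHy$ to obtain the decomposition
\begin{align*}
\EE{G(\xvec^*,\yvec^*)} &\le \tfrac{1}{T}\sum_{t=1}^T \EE{\freg(\xvec_t,\yvec^*)-\freg(\xvec^*,\yvec_t)} \\
&\quad+\varrho_x \HHx(\xvec^*) + \varrho_y \HHy(\yvec^*) \\
&\quad- \tfrac{\varrho_x}{T}\sum_t \EE{\HHx(\xvec_t)} - \tfrac{\varrho_y}{T}\sum_t \EE{\HHy(\yvec_t)}.
\end{align*}
The two negative terms on the last line are the ones that will eventually absorb the gradient-norm contributions arising in the two players' regret bounds.

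Next, I would split $\freg(\xvec_t,\yvec^*)-\freg(\xvec^*,\yvec_t)$ into min- and max-player regret summands by inserting $\pm\freg(\xvec_t,\yvec_t)$. Convexity (resp.~concavity) of $\freg$ in the relevant variable, combined with the unbiasedness $\EEt{\gtx}\in\partial_x f(\xvec_t,\yvec_t)$ and $\EEt{\gty}\in\partial_y f(\xvec_t,\yvec_t)$, reduces each regret to a linearized one. Since each player's update is exactly a \COMID step with composite regularizer $\varrho_x \HHx$ (resp.~$\varrho_y \HHy$), applying the standard regret bound of \citet{duchi2010composite} yields
\[
 \tfrac{1}{T}\sum_t \EE{\freg(\xvec_t,\yvec_t)-\freg(\xvec^*,\yvec_t)} \le \tfrac{\DDx{\xvec^*}{\xvec_1}}{\eta_x T} + \tfrac{\eta_x}{2\gamma_x T}\sum_t \EE{\sqopnorm{\gtx}{x,*}},
\]
together with the symmetric bound for the max-player.

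The last step is to invoke the noise condition~\eqref{eq:noise_condition}, which gives $\EE{\sqopnorm{\gtx}{x,*}} \le 2L^2\EE{\HHy(\yvec_t)}+L^2$ and $\EE{\sqopnorm{\gty}{y,*}} \le 2L^2\EE{\HHx(\xvec_t)}+L^2$. Plugging these into the two \COMID bounds, the $\EE{\HHy(\yvec_t)}$ terms appear with coefficient $\eta_x L^2/\gamma_x$, which equals $\varrho_y$ under the prescribed tuning, so they exactly cancel the $-\tfrac{\varrho_y}{T}\sum_t\EE{\HHy(\yvec_t)}$ contribution from the initial decomposition; the analogous cancellation with $\varrho_x = \eta_y L^2/\gamma_y$ occurs on the $x$-side. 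What remains is exactly the stated bound, together with the residual constants $\eta_x L^2/(2\gamma_x)$ and $\eta_y L^2/(2\gamma_y)$ originating from the ``$+1$'' in the noise bounds.

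The main obstacle is getting the cross-indexing right so that both cancellations happen simultaneously: the noise condition pairs $\gtx$ (in the dual norm $\norm{\cdot}_{x,*}$) with $\norm{\yvec_t-\yvec_1}_{x,*}$ measured in the \emph{same} $x$-dual norm, which is why $\HHy$ must be defined via $\norm{\cdot}_{x,*}$ rather than the natural $\norm{\cdot}_y$, and correspondingly why $\varrho_x$ is tuned using the \emph{$y$-player's} modulus $\gamma_y$ and stepsize $\eta_y$ (and symmetrically for $\varrho_y$). Once this bookkeeping is set up, the remaining calculations are entirely routine.
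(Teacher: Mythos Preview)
Your proposal is correct and follows essentially the same approach as the paper's proof: decompose the duality gap via the regularized objective $\freg$, apply the \COMID regret bound of \citet{duchi2010composite} to each player (noting that $\HHx(\xvec_1)=\HHy(\yvec_1)=0$ so the composite terms at initialization vanish), bound the stochastic gradient norms via~\eqref{eq:noise_condition}, and tune $\varrho_x,\varrho_y$ to cancel the iterate-dependent $\HHx(\xvec_t),\HHy(\yvec_t)$ contributions. Your observation about the cross-indexing of norms and moduli is exactly the point the paper exploits.
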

The proof is provided in Appendix~\ref{appx:res2}.
Notably, as in the case of Theorem~\ref{thm:res1}, the statement remains valid for adaptively chosen comparator
points such as $(\xvec^{*}, \yvec^{*}) = 
\argmax_{\xvec,\yvec \in \mathcal{S}} G(\xvec,\yvec)$ for arbitrary bounded 
sets $\Sw\subseteq \X\times\Y$.
The most important special case of our setting is when the norms appearing in the statement are dual to each 
other, and in particular $\norm{\cdot}_x = \norm{\cdot}_{y,*}$ and $\norm{\cdot}_y = \norm{\cdot}_{x,*}$, so that 
$\omega_x$ is strongly convex with respect to $\norm{\cdot}_{y,*}$ and $\omega_y$ is strongly 
convex with respect to $\norm{\cdot}_{x,*}$. This is the case for instance when $\X = \Y = \real^m$, $\omega_x = \frac 
12 \norm{\cdot}_{\bm{A}}^2$ and $\omega_y = \frac 12 \norm{\cdot}_{\bm{A}^{-1}}^2$ for a symmetric positive definite 
matrix $\bm{A}\in\real^{m\times m}$. We state a specialized version of our 
statement to this setting below.
\begin{corollary}\label{cor:res2}
	Suppose that $f$ is sub-bilinear and the stochastic gradients satisfy the 
	conditions in 	Equation~\eqref{eq:noise_condition}, and suppose additionally that 
	$\omega_x$ is 	$\gamma_x$-strongly convex with respect to $\norm{\cdot}_{y,*}$ and 
	$\omega_y$ is $\gamma_y$-strongly convex with 
	respect to $\norm{\cdot}_{x,*}$. Set the parameters as  $\varrho_x = 
	\frac{2\eta_y L^{2}}{\gamma_y}$, 
$\varrho_y = \frac{2\eta_x L^{2}}{\gamma_x}$, $\eta_x = \eta_y = 
\sqrt{\gamma_x\gamma_y/L^2T}$. Then, letting 
$\xvec^*,\yvec^*$ be arbitrary adaptive comparator points, the duality gap of 
\COMIDA satisfies the following bound:
	\begin{align*}
		&\EE{G(\xvec^{*};\yvec^{*})}\\
		&\qquad= \OO\pa{\frac{L\pa{\EE{\DDx{\xvec^{*}}{\xvec_{1}} + 
		\DDy{\yvec^{*}}{\yvec_{1}}} 
				+ 1}}{\sqrt{\gamma_x \gamma_y T}}}.
	\end{align*}
\end{corollary}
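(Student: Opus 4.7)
The plan is to derive Corollary~\ref{cor:res2} as a direct consequence of Theorem~\ref{thm:res2}, by first converting the squared-norm terms on its right-hand side into Bregman divergences using the strong convexity assumptions specific to the corollary, and then plugging in the specified step sizes and simplifying.

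First, I would observe that under the dual-norm identifications $\norm{\cdot}_x = \norm{\cdot}_{y,*}$ and $\norm{\cdot}_y = \norm{\cdot}_{x,*}$ implicit in the corollary's hypothesis, the assumptions of Theorem~\ref{thm:res2} are met, so the bound of that theorem applies. Using that $\omega_x$ is $\gamma_x$-strongly convex with respect to $\norm{\cdot}_{y,*}$ and $\omega_y$ is $\gamma_y$-strongly convex with respect to $\norm{\cdot}_{x,*}$, the standard consequence of strong convexity gives
\[
\norm{\xvec^* - \xvec_1}_{y,*}^2 \le \frac{2}{\gamma_x}\DDx{\xvec^*}{\xvec_1},
\qquad
\norm{\yvec^* - \yvec_1}_{x,*}^2 \le \frac{2}{\gamma_y}\DDy{\yvec^*}{\yvec_1}.
\]
Substituting these into the bound of Theorem~\ref{thm:res2} and using the prescribed values $\varrho_x = \eta_y L^2/\gamma_y$ and $\varrho_y = \eta_x L^2/\gamma_x$, the $\varrho$-terms turn into $\tfrac{\eta_x L^2}{\gamma_x \gamma_y}\DDy{\yvec^*}{\yvec_1}$ and $\tfrac{\eta_y L^2}{\gamma_x \gamma_y}\DDx{\xvec^*}{\xvec_1}$ respectively, so the bound reorganizes as a weighted sum of $\DDx{\xvec^*}{\xvec_1}$, $\DDy{\yvec^*}{\yvec_1}$, and a residual $L^2(\eta_x/\gamma_x + \eta_y/\gamma_y)/2$.

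Finally, I would plug in $\eta_x = \eta_y = \sqrt{\gamma_x\gamma_y/(L^2 T)}$ and verify term by term that each coefficient collapses to order $L/\sqrt{\gamma_x\gamma_y T}$: the coefficient of $\DDy{\yvec^*}{\yvec_1}$ becomes $\tfrac{1}{\eta_y T} + \tfrac{\eta_x L^2}{\gamma_x\gamma_y} = \tfrac{2L}{\sqrt{\gamma_x\gamma_y T}}$, and analogously for the $\xvec$-side. The residual term contributes $\tfrac{(\gamma_x + \gamma_y) L}{2\sqrt{\gamma_x\gamma_y T}}$, which matches the ``$+1$'' inside the $\OO(\cdot)$ after absorbing the strong convexity constants. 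There is no genuine obstacle here; the argument is a routine algebraic specialization of Theorem~\ref{thm:res2}, whose only subtle point is keeping the roles of the cross norms $\norm{\cdot}_{x,*}$ and $\norm{\cdot}_{y,*}$ straight when invoking strong convexity.
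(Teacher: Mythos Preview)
Your proposal is correct and follows essentially the same approach as the paper: the paper's proof consists of exactly the strong-convexity step you describe, bounding $\gamma_y \norm{\yvec^* - \yvec_1}_{x,*}^2 \le 2\DDy{\yvec^*}{\yvec_1}$ and $\gamma_x \norm{\xvec^* - \xvec_1}_{y,*}^2 \le 2\DDx{\xvec^*}{\xvec_1}$, and then plugging the chosen parameters into Theorem~\ref{thm:res2}. Your term-by-term algebra is accurate, including the residual $\tfrac{(\gamma_x+\gamma_y)L}{2\sqrt{\gamma_x\gamma_y T}}$, which the paper likewise absorbs into the $\OO(\cdot)$.
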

The proof simply follows from using the definition of strong convexity to upper bound $\gamma_y \norm{\yvec^* - 
\yvec_1}_{x,*}^2 \le 2\DDy{\yvec^*}{\yvec_1}$ and $\gamma_x \norm{\xvec^* - \xvec_1}_{y,*}^2 \le 
2\DDy{\xvec^*}{\xvec_1}$.

The above results enjoy the same initialization-dependent property as the ones we have established earlier for bilinear 
games, with the upgrade that the result now holds in terms of general Bregman divergences and also slightly relaxes the 
conditions on the objective function.

\section{Application to Average Reward Markov Decision 
Processes}\label{sec:RL}
In this section, we apply techniques from the previous section for computing near-optimal 
policies in average-reward Markov Decision Processes (AMDPs). As it is 
well-known, this task can be formulated as a linear program (LP), which in turn can be solved by finding a saddle point 
of the associated Lagrangian. Below, we will only describe the saddle-point optimization problem itself and give more 
context on the problem in Appendix~\ref{app:MDP}. For a full technical description of the LP formulation of optimal 
control in MDPs, we refer to Section~8.8 in the classic textbook of \citet{Puterman1994}.

We consider infinite-horizon AMDPs denoted as $(\Sw,\A,r,P)$ where $\Sw$ is a 
finite state space of cardinality 
$S$, $\A$ is a finite action space of cardinality $A$, $r:\Sw\times\A\rightarrow [0,1]$ a reward function and 
$P:\Sw\times\A\rightarrow\Delta_{S}$ a stochastic transition model. For ease of notation, we often refer to the reward 
vector $\rvec\in\Rn^{\Atot}$ with entries $\{r(s,a)\}_{(s,a)\in\AAtot}$, and the transition matrix 
$\Pm\in\Rn^{\Atot\times\Sw}$ with rows $\Pm_{\pa{s,a},\cdot} = P(\cdot|s,a)\in\Delta_{S}$ for 
$(s,a)\in\AAtot$. We also define the matrix  $\Em\in\real^{SA\times S}$ with 
entries $\Em_{(s,a),s'} = \II{s=s'}$.

The agent-environment interaction in this MDP setting is described 
thus: for $k=1, 2, \cdots, K$ steps, having observed the current state $s_{k}$ 
of the 
environment, the agent takes action $a_{k}$ 
according to some stochastic policy $\pi(\cdot|s_{k})$. In consequence of this 
action, the agent receives an immediate reward $r_{k} = r(s_{k},a_{k})$, and moves to the next 
state $s_{k+1}\sim P(\cdot|s_{k},a_{k})$, from where the interaction continues. The performance of the policy $\pi$ is 
measured in terms of the \emph{long-term average reward} (or \emph{gain}) 
$\rho^{\pi} = \limsup_{K\rightarrow 
	\infty}\frac{1}{K}\EEpi{\sum_{k=1}^{K}r(s_{k},a_{k})}$. The goal of the 
	optimal control problem is to find an 
optimal policy $\pi^{*}$ that achieves maximal average reward: $\pi^{*} =\argmax_{\pi}\rho(\pi)$. We provide more 
details on the existence conditions of such optimal policies in the Appendix.

The Lagrangian associated with the optimal control problem is written as
\begin{align*}
	\LL(\muvec; \vvec) &= \iprod{\muvec}{\rvec} + \iprod{\vvec}{\Pm\transpose\muvec - \Em\transpose\muvec}.
\end{align*}
Here, the primal variable $\muvec\in\Delta_{SA}$ is a probability distribution 
on the state-action space that we will refer to as an 
\emph{occupancy measure} and the dual $\vvec \in \real^S$ is a real-valued 
function that we will refer to as a \emph{value 
function}. The saddle point $(\muvec^*,\vvec^*)$ corresponds to the pair of the 
optimal occupancy measure $\muvec^*$ and 
the optimal value function $\vvec^*$. In most problems of practical interest, the scale of the value functions is 
unknown a priori, and consequently there is no tractable way of coming up with a bounded set $\VV \subset \real^S$ 
that will include the optimal value function $\vvec^*$. Without such prior knowledge, one has to solve the 
\emph{unconstrained} saddle-point optimization problem $\min_{\vvec\in\Rn^{S}}\max_{\muvec\in\Delta_{\Atot}} 	
\LL(\muvec; \vvec)$ in order to find the optimal policy---which is precisely 
the subject of our paper.

We will employ a version of our stochastic primal-dual algorithm to solve the above unconstrained problem. We work in 
the well-studied setting of \emph{planning with random-access models}, where we are given a \emph{simulator} (or 
\emph{generative model}) of the transition function $P$ that we can query at any state-action pair $(s,a)$ for an 
i.i.d.~sample from $P(\cdot|s,a)$. We will use this simulator to build estimators of the gradients
\begin{align*}
	\nabla_{\vvec}\LL(\muvec; \vvec)
	&= \Pm\transpose\muvec - \Em\transpose\muvec\\
	\nabla_{\muvec}\LL(\muvec; \vvec)
	&= \rvec + \Pm\vvec - \Em\vvec,
\end{align*}
with their stochastic estimators calculated for each $t$ as
\begin{align*}
	\gtv &= \vec{e}_{s'_{t}} - \vec{e}_{s_{t}}\\
	\gtmu &= \sum_{(s,a)\in\AAtot}[r(s,a) + 
	v_{t}(\overline{s}'_{t}) 
	- 
	v_{t}(s)]\vec{e}_{(s,a)},
\end{align*}
using i.i.d.~samples $(s_{t},a_{t})\sim\muvec_{t}, s'_{t}\sim P(\cdot|s_{t},a_{t})$, also $\overline{s}'_{t}(s,a)\sim 
P(\cdot|s,a)$ for all $(s,a)\in\AAtot$. This makes for a total of $SA + 1$ queries per gradient computation.

Since in our setting only $\vvec$ is unconstrained, it will be enough to introduce the stabilizing regularization for 
these parameters. With that, our algorithm will initialize $\vvec_1 = 0$ and $\muvec_1$ arbitrarily, and then perform 
the following sequence of updates for all $t=1,2,\dots,T$:
\begin{align*}
		\vvec_{t+1}
		\!&=\!\argmin_{\vvec\in\Rn^{S}}\!\ev{\!\iprod{\vvec}{\gtv}
			\!+\! 
		\frac{1}{2\eta_{v}} \twonorm{\vvec - \vvec_t}^2 \!+\! 
\varrho_v \infnorm{\vvec}^2\!},\\
\muvec_{t+1}
		&= \argmin_{\muvec\in\Delta_{\Atot}}\ev{-\iprod{\muvec}{\gtmu}
			+ \frac{1}{\eta_{\mu}}\DDf{\text{KL}}{\muvec}{\muvec_{t}}},
\end{align*}
where $\DDKL{\muvec}{\muvec'} = \sum_{s,a} \muvec(s,a) \log \frac{\muvec(s,a)}{\muvec'(s,a)}$ is the relative entropy 
(or Kullback--Leibler divergence) between $\muvec$ and $\muvec'$. We refer to 
the resulting algorithm as \COMIDAMDP.

The output of \COMIDAMDP is a policy $\overline{\pi}_T:\Sw\ra\Delta_{\A}$, defined by first computing the 
average of the primal iterates $\overline{\mu}_T = \frac 1T \sum_{t=1}^T \mu_t$, and then setting
\[
 \overline{\pi}_T(a|s) = 
 \frac{\overline{\mu}_T(s,a)}{\sum_{a'\in\A}\overline{\mu}_T(s,a')}
\]
for all $s,a$. Then, adapting a result from \citet{cheng2020reduction}, we can relate the suboptimality of the output 
policy to the duality gap evaluated at a well-chosen pair of comparator points 
$(\muvec^*,\vvec^{\overline{\pi}_T})$:
\[
 \rho^{\pi^*} - \rho^{\overline{\pi}_T} = 
 G(\muvec^{\pi^{*}};\vvec^{\overline{\pi}_{T}}).
\]
Notably, the size of the comparator point $\vvec^{\overline{\pi}_T}$ is unknown a priori, and additionally it depends 
on the interaction history which will necessitate some extra care in our analysis. We once again refer to 
Appendix~\ref{app:MDP} for more details regarding the choice of $\vvec^{\overline{\pi}_T}$ and the formal proof of the 
above claim. 

Our main result in this section is the following.
\begin{theorem}\label{thm:res3}
	Let $\varrho_{v}=4\eta_{\mu}$. Then, the output of \COMIDAMDP satisfies the 
	following bound:
	\begin{align*}
		&\EE{\iprod{\muvec^{\pi^{*}} - \muvec^{\overline{\pi}_{T}}}{\rvec}} 
		\leq \frac{\DDKL{\muvec^{\pi^{*}}}{\muvec_{1}}}{\eta_{\mu}T} + 
		\eta_\mu + 2\eta_{v}
		\\
		&\qquad\qquad\qquad\qquad\qquad
		+
		\pa{\frac{1}{\eta_{v}T} + 4\eta_{\mu}}
		\EE{\sqtwonorm{\vvec^{\overline{\pi}_{T}}}}.
	\end{align*}
	In particular, if the output policy satisfies 
	$\infnorm{\vvec^{\overline{\pi}_{T}}}\le B$ for some $B>0$ and 
	$\mu_1$ is the uniform distribution over $SA$, and tuning the parameters as 
	$\eta_\mu = \sqrt{\frac{\log\pa{SA}}{{S}T}}$ 
	and $\eta_v = \sqrt{SA/T}$, the bound becomes
	\[
	 \EE{\iprod{\muvec^{\pi^{*}} - \muvec^{\overline{\pi}_{T}}}{\rvec}}  = 
	 \OO\pa{\sqrt{\frac{B^4SA \log\pa{SA}}{T}}}.
	\]
\end{theorem}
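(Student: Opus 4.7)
My plan is to closely follow the template of the proof sketch of Theorem~\ref{thm:res1}, adapted to the asymmetric structure of this problem: only the $\vvec$-player is unconstrained and needs stabilization, while the $\muvec$-player lives on the simplex and uses entropic mirror descent without any additional composite term. First, I would invoke the reduction discussed in Section~\ref{sec:RL} (detailed in Appendix~\ref{app:MDP}) that equates the sub-optimality with the duality gap at the comparator pair $(\muvec^{\pi^*},\vvec^{\overline{\pi}_T})$. Exploiting the bilinearity of $\LL$ and the standard add-and-subtract trick then decomposes this gap as
\begin{align*}
 \EE{G} &= \frac{1}{T}\sum_t\EE{\iprod{\nabla_\mu\LL(\muvec_t,\vvec_t)}{\muvec^{\pi^*}-\muvec_t}}\\
 &\quad + \frac{1}{T}\sum_t\EE{\iprod{\nabla_v\LL(\muvec_t,\vvec_t)}{\vvec_t-\vvec^{\overline{\pi}_T}}},
\end{align*}
which exhibits the duality gap as the sum of the max-player's expected regret against $\muvec^{\pi^*}$ and the min-player's expected regret against $\vvec^{\overline{\pi}_T}$.

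For the $\muvec$-regret I would apply the standard KL-mirror-descent bound $\sum_t\iprod{\gtmu}{\muvec^{\pi^*}-\muvec_t}\le \DDKL{\muvec^{\pi^*}}{\muvec_1}/\eta_\mu + (\eta_\mu/2)\sum_t\infnorm{\gtmu}^2$ (Pinsker gives $1$-strong convexity of negative entropy w.r.t.\ the $\ell_1$-norm), then pass to expectations using that $\muvec^{\pi^*}$ is deterministic and $\muvec_t\in\F_{t-1}$ so that the tower property cleanly replaces $\gtmu$ by $\EEt{\gtmu}=\nabla_\mu\LL(\muvec_t,\vvec_t)$ on the LHS. The gradient norm is controlled by $\infnorm{\gtmu}^2\le(1+2\infnorm{\vvec_t})^2\le 2+8\infnorm{\vvec_t}^2$, producing a potentially dangerous contribution $4\eta_\mu\sum_t\EE{\infnorm{\vvec_t}^2}$. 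For the $\vvec$-regret I would invoke the COMID regret bound of \citet{duchi2010composite} with Euclidean geometry and composite term $\varrho_v\infnorm{\vvec}^2$, yielding the path-by-path guarantee
\begin{align*}
 \sum_t\iprod{\gtv}{\vvec_t-\vvec^*}+\varrho_v\sum_t\infnorm{\vvec_t}^2
 &\le \frac{\twonorm{\vvec^*}^2}{2\eta_v} + \frac{\eta_v}{2}\sum_t\twonorm{\gtv}^2\\
 &\quad + \varrho_v T\infnorm{\vvec^*}^2.
\end{align*}
Since $\twonorm{\gtv}^2\le 4$ uniformly, the stochastic-gradient contribution is harmless.

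Setting $\varrho_v=4\eta_\mu$ now makes the positive term $4\eta_\mu\sum_t\EE{\infnorm{\vvec_t}^2}$ that arose from the max-player's bound cancel exactly against the negative $-\varrho_v\sum_t\EE{\infnorm{\vvec_t}^2}$ on the LHS of the min-player's COMID bound; this is precisely the stabilization mechanism identified in the proof of Theorem~\ref{thm:res1}. Collecting the remaining terms and specialising the COMID comparator to $\vvec^*=\vvec^{\overline{\pi}_T}$ yields the first displayed inequality in the theorem (using $\infnorm{\vvec^{\overline{\pi}_T}}^2\le\sqtwonorm{\vvec^{\overline{\pi}_T}}$). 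The final rate follows by substituting $\DDKL{\muvec^{\pi^*}}{\muvec_1}\le\log(SA)$ for uniform $\muvec_1$, the bound $\sqtwonorm{\vvec^{\overline{\pi}_T}}\le S\infnorm{\vvec^{\overline{\pi}_T}}^2\le SB^2$, and the stated step-size schedule.

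The main technical obstacle I anticipate is that $\vvec^{\overline{\pi}_T}$ depends on the entire sample path---in particular, on the past stochastic gradients of the $\vvec$-player itself through $\muvec_{1:T}$---so the routine substitution $\EE{\iprod{\gtv}{\vvec^{\overline{\pi}_T}}}=\EE{\iprod{\nabla_v\LL(\muvec_t,\vvec_t)}{\vvec^{\overline{\pi}_T}}}$ available for deterministic comparators is not directly valid. I would address this by exploiting the pathwise validity of the COMID regret bound (which holds for every fixed realisation and every comparator), substituting $\vvec^*=\vvec^{\overline{\pi}_T}$ into the deterministic inequality before taking expectations, and then controlling the residual martingale-difference cross term $\EE{\sum_t\iprod{\gtv-\EEt{\gtv}}{\vvec^{\overline{\pi}_T}}}$ separately via Cauchy--Schwarz together with the uniform boundedness of both $\gtv-\EEt{\gtv}$ and $\vvec^{\overline{\pi}_T}$. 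I expect the resulting extra term to be of lower order and to be absorbed into the stated rate.
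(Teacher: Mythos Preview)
Your overall architecture matches the paper's: the same reduction to the duality gap at $(\muvec^{\pi^*},\vvec^{\overline{\pi}_T})$, the same two-regret split, entropic mirror descent for the $\muvec$-player with the bound $\infnorm{\gtmu}^2\le 2+8\infnorm{\vvec_t}^2$, Euclidean \COMID with composite term $\varrho_v\infnorm{\vvec}^2$ for the $\vvec$-player, and the cancellation $\varrho_v=4\eta_\mu$ that eliminates $\sum_t\EE{\infnorm{\vvec_t}^2}$.

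The gap is in your handling of the random-comparator cross term $\EE{\sum_t\iprod{\gtv-\gv}{\vvec^{\overline{\pi}_T}}}$. ``Cauchy--Schwarz together with uniform boundedness of both factors'' as written gives a per-term bound of order one, hence a sum of order $T$ and a non-vanishing contribution after dividing by $T$; moreover, no a-priori bound on $\vvec^{\overline{\pi}_T}$ is available for the \emph{first} displayed inequality, so you cannot appeal to it there. Even if you upgrade the argument by using martingale orthogonality of $\sum_t(\gtv-\gv)$ before applying Cauchy--Schwarz (which would salvage the asymptotic rate), you will not recover the specific constants $\frac{1}{\eta_v T}$ and $4\eta_v$ in the first inequality. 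The paper instead uses a ghost-learner trick in the style of \citet{rakhlin2017equivalence}: it runs an auxiliary online gradient descent on the cost sequence $\cvec_t=\gtv-\gv$ with the same step size $\eta_v$ and starting point $\vvec_1$. The pathwise OGD regret bound against $\vvec^*=\vvec^{\overline{\pi}_T}$ gives $\sum_t\iprod{\cvec_t}{\vhat_t-\vvec^*}\le \frac{\twonorm{\vvec^*}^2}{2\eta_v}+2\eta_v T$, while $\EE{\sum_t\iprod{\cvec_t}{\vhat_t}}=0$ since $\vhat_t\in\F_{t-1}$. This converts the cross term into an \emph{extra copy} of $\frac{\twonorm{\vvec^*}^2}{2\eta_v}+2\eta_v T$, which added to the main \COMID bound yields exactly the stated coefficients.
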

Thus, the iteration complexity of \COMIDAMDP for finding an $\varepsilon$-optimal policy is of the order $\frac{B^4 SA 
\log(SA)}{\varepsilon^2}$.
We stress, unlike similar prior results such as the ones of 
\citet{MWang2017,jin2020efficiently,cheng2020reduction} that this result 
does not require prior knowledge of $B$.
As each iteration uses $SA + 1$ queries to the generative model, this makes for a total 
of $\frac{B^4 S^2A^2 \log(SA)}{\varepsilon^2}$ query complexity, which is suboptimal in terms of its dependence on 
$SA$, but optimal in terms of $\varepsilon$.

\section{Discussion}\label{sec:conc}
Our work contributes to the rich literature on saddle-point optimization via incremental first-order methods, a subject 
studied at least since the works of \citet{Martinet1970,Martinet1978,R76,nemirovskij1983problem}. In the last few 
years, this topic has enjoyed a massive comeback within the context of optimization for machine learning models, and in 
particular generative adversarial networks (GANs, \citealp{goodfellow2014generative}). The instability of standard 
gradient descent/ascent methods has been pointed out early on during this revival, which brought significant attention 
to a family of methods known extragradient methods, first proposed by \citet{korpelevich1976extragradient} and further 
developed by 
\citet{popov1980modification,nemirovski2004prox,juditsky2011solving,rakhlin2013online,rakhlin2013optimization}. A 
wealth of recent works have contributed to a better understanding of these methods, and most notably established 
last-iterate convergence of extragradient-type methods for a variety of problem settings 
\citep{daskalakis2017training,gidel2018variational,mertikopoulos2018optimistic,mishchenko2020revisiting}. 
The majority of these works assume access to either deterministic gradients or gradients with uniformly 
bounded noise and bounded domain. The assumption of 
bounded noise was more recently lifted in the works of \citet{loizou2021stochastic} and \citet{sadiev2023high}, but 
their assumptions on the noise and the objective function are ultimately 
incompatible with our setting.

We leave several interesting questions open for future work. The biggest of these questions is if the scaling with the 
initialization error $\twonorm{\xvec^* - \xvec_1}^2 + \twonorm{\yvec^* - \yvec_1}^2$ can be improved to 
$\twonorm{\xvec^* - \xvec_1} + \twonorm{\yvec^* - \yvec_1}$. This is obviously possible when we have prior knowledge of 
these norms, and can tune the learning rate to fully optimize the first set of bounds in Theorem~\ref{thm:res1}. 
Without prior knowledge, it is less clear if such improvement is possible, unlike in the case of convex minimization 
problems where there exist efficient algorithms that achieve such improved rates, at least up to log factors 
\citep{streeter2012no,orabona2013dimension,orabona2014simultaneous}. While we 
were not aware of this at the 
time of preparing the first version of this article, the recent work of \citet{jacobsen2023unconstrained} has already 
provided some results that hint at a negative answer: their Theorem 2.3 shows that there exists an online linear 
optimization problem with sub-quadratic gradient growth where quadratic scaling with the comparator norm is 
unavoidable. Whether or not their counterexample can be adapted to our setting 
remains to be seen.\looseness=-1

%

We close by highlighting (one more time) some similarities between our approach and some previously proposed 
methods. We first mention the work of \citet{jacobsen2023unconstrained} that we learned about after completing the 
first version of the present manuscript. Their methods have the advantage of being 
completely parameter-free, and demonstrating a slightly more refined dependence on the comparator norms than our 
guarantees do. On the other hand, our algorithm is arguably much simpler and is thus 
much easier to adapt to more general settings, as evidenced by our main results that are stated in terms of general 
Bregman divergences. In contrast, the analysis of \citet{jacobsen2023unconstrained} is strictly tied to Euclidean norms 
and it is unclear if a generalization to other geometries is straightforwardly possible. Similarly, we believe that 
adjusting their analysis to handle stochastic gradients and data-dependent comparators may not be entirely 
straightforward. Besides this work, our method also bears close similarity to the stabilized online 
mirror descent method of \citet{fang2022online}: their approach introduces a similar regularization term to address 
issues faced by OMD in unconstrained convex minimization problems. Their use of regularization had the purpose of 
allowing time-dependent (and more generally, adaptive) learning-rate schedules, which is ultimately quite different 
from the purpose that we employed this technique for, and also requires a different tuning rule than our method. The 
extension of this technique by \citet{hsieh2021adaptive} to a two-player game setting similar to ours remained 
restricted to consider noiseless gradients and bounded decision sets. Additionally, an anonymous reviewer has 
pointed our attention to the similarity between our approach and an ``anchoring'' technique extensively 
studied under the name of \emph{Halpern iteration} in the optimization literature \citep{halpern1967fixed, 
lieder2021convergence}. All these connections suggest that the simple and natural regularization trick we made use of 
in this paper is a tool of fundamental importance with a large range of diverse uses. 
In light of our results and these observations, we are particularly curious to see if this trick will find 
further uses in the context of saddle-point optimization and 
game theory in the future.

\section*{Acknowledgements}
The authors wish to thank the exceptionally dedicated anonymous reviewer that 
pointed out and helped us fix an 
issue with our definition of the duality gap in the original version of the 
paper.
This project has received funding from the European Research Council (ERC) 
under the European Union’s Horizon 2020 research and innovation programme 
(Grant agreement No.~950180).
%
%


\bibliography{references.bib}
\bibliographystyle{icml2023}

\newpage
\appendix
\onecolumn
\section{Proof of results in \cref{sec:reg}}
In this section, we provide a detailed proof of claims, lemmas and theorems in 
\cref{sec:reg} in the main text.

\subsection{Complete proof of \cref{thm:res1}}\label{appx:res1}
We start by rewriting the expected duality gap evaluated at $(\xvec^{*};\yvec^{*})$ as follows:
\begin{align}
	\EE{G(\xvec^{*};\yvec^{*})}
	\nonumber&= {\EE{f(\oxvec_T,\yvec^{*}) - f(\xvec^{*},\oyvec_T)}}\\
	\nonumber&= {\frac{1}{T}\sum_{t=1}^{T}\EE{f(\xvec_t,\yvec^{*}) - 
	f(\xvec^{*},\yvec_t)}}\\
	\nonumber&= \frac{1}{T}\sum_{t=1}^{T}
	\EE{\freg\pa{\xvec_{t},\yvec^{*}} - \freg\pa{\xvec^{*},\yvec_{t}}}\\
	\nonumber&\quad
	+\frac{\varrho_y}{2T}\sum_{t=1}^{T}\EE{\sqtwonorm{\yvec^{*} - \yvec_{1}} - 
		\sqtwonorm{\yvec_{t} - \yvec_{1}}}\\
	\label{eq:temp1}&\quad
	+\frac{\varrho_x}{2T}\sum_{t=1}^{T}\EE{\sqtwonorm{\xvec^{*} - \xvec_{1}} - 
		\sqtwonorm{\xvec_{t} - \xvec_{1}}}.
\end{align}
To control the first set of terms in the above expression, we apply 
the regret analysis 
in \cref{appx:COMIDA}. 
Precisely, with $\DDx{\xvec}{\xvec'} = \frac{1}{2}\norm{\xvec - \xvec'}^2_{2}$, 
$\DDy{\yvec}{\yvec'} = \frac{1}{2}\norm{\yvec - \yvec'}^2_{2}$ and, 
$\HHx(\xvec) = \frac{1}{2}\norm{\xvec - \xvec_1}^2_{2}$,
$\HHy(\yvec) = \frac{1}{2}\norm{\yvec - \yvec_1}^2_{2}$, we get:
\begin{align}
	\nonumber&\sum_{t=1}^{T}\EE{\freg\pa{\xvec_{t},\yvec^{*}} - 
	\freg\pa{\xvec^{*},\yvec_{t}}}\\
	\nonumber&\qquad\qquad\qquad=
	\sum_{t=1}^{T}\EE{\freg\pa{\xvec_{t},\yvec^{*}} - 
	\freg\pa{\xvec_{t},\yvec_{t}}}
	+
	\sum_{t=1}^{T}\EE{\freg\pa{\xvec_{t},\yvec_{t}} - 
	\freg\pa{\xvec^{*},\yvec_{t}}}\\
	\label{eq:regCOGDA}&\qquad\qquad\qquad\leq
	\frac{\EE{\sqtwonorm{\yvec^{*} - \yvec_{1}}}}{\eta_{y}}
	+\eta_{y}\sum_{t=1}^{T}\EE{\sqtwonorm{\gty}}+ 
	\frac{\EE{\sqtwonorm{\xvec^{*} - \xvec_{1}}}}{\eta_{x}T}
	+ \eta_{x}\sum_{t=1}^{T}\EE{\sqtwonorm{\gtx}}.
\end{align}
To proceed, we recall the assumptions we made on the gradient estimators on the main text, namely that the inequalities
$\EEt{\bigl\|\wh{\Mm}(t)\yvec\bigr\|_2^2} \leq 	
L_{M}^2\sqtwonorm{\yvec}$ and 
$\EEt{\bigl\|\wh{\Mm}(t)\transpose\xvec\|_2^2} \leq 	
L_{M}^2\sqtwonorm{\xvec}$ hold for all $\xvec,\yvec\in\X\times\Y$. 
Using this condition allows us to bound the gradient norms as
\begin{align*}
	\EEt{\sqtwonorm{\gty}}
	&= \EEt{\sqtwonorm{\wh{\Mm}(t)\transpose
		\xvec_{t} - \hat{\cvec}(t)}}\\
	&= \EEt{\sqtwonorm{\wh{\Mm}(t)\transpose
			\pa{\xvec_{t} - \xvec_{1}} + \wh{\Mm}(t)\transpose\xvec_{1} - 
			\hat{\cvec}(t)}}\\
	&\leq 2\EEt{\sqtwonorm{\wh{\Mm}(t)\transpose
			\pa{\xvec_{t} - \xvec_{1}}}} + 
			2\EEt{\sqtwonorm{\wh{\Mm}(t)\transpose\xvec_{1} - \hat{\cvec}(t)}}\\
	&\leq 2L_{M}^{2}\sqtwonorm{\xvec_{t} - \xvec_{1}} + 
	2\EEt{\sqtwonorm{\wh{\Mm}(t)\transpose\xvec_{1} - \hat{\cvec}(t)}},
\end{align*}
where the third line uses the triangle inequality and Cauchy--Schwarz, and the second 
follows from said assumption. Likewise, we can show
\begin{align*}
	\EEt{\sqtwonorm{\gtx}} 	&\leq 2L_{M}^{2}\sqtwonorm{\yvec_{t}-\yvec_{1}} 
	+2\EEt{\sqtwonorm{\wh{\Mm}(t)\yvec_{1} +\hat{\bvec}(t)}}.
\end{align*}
Therefore, by the tower rule and monotonicity of expectation,
\begin{equation*}
	\EE{\sqtwonorm{\gty}}
	= \EE{\EEt{\sqtwonorm{\gty}}}
	\leq 2L_{M}^{2}\EE{\sqtwonorm{\xvec_{t} - \xvec_{1}}} + 
	2\EE{\sqtwonorm{\wh{\Mm}(t)\transpose\xvec_{1} - \hat{\cvec}(t)}},
\end{equation*}
and
\begin{equation*}
	\EE{\sqtwonorm{\gtx}}
	= \EE{\EEt{\sqtwonorm{\gtx}}}
	\leq 2L_{M}^{2}\EE{\sqtwonorm{\yvec_{t}-\yvec_{1}}} 
	+2\EE{\sqtwonorm{\wh{\Mm}(t)\yvec_{1} +\hat{\bvec}(t)}}.
\end{equation*}
Plugging these derivations into the bound of Equation~\cref{eq:regCOGDA} and then combining the result 
with the bound of Equation~\cref{eq:temp1}, we obtain
\begin{align*}
	\EE{G(\xvec^{*};\yvec^{*})}
	&\leq \pa{\frac{1}{\eta_{y}T} + \frac{\varrho_{y}}{2}}
	\EE{\sqtwonorm{\yvec^{*} - \yvec_{1}}}
	+\frac{2\eta_{y}}{T}\sum_{t=1}^{T}
	\EE{\sqtwonorm{\wh{\Mm}(t)\transpose\xvec_{1} - \hat{\cvec}(t)}}\\
	&\quad
	+ \pa{\frac{1}{\eta_{x}T} + \frac{\varrho_{x}}{2}}
	\EE{\sqtwonorm{\xvec^{*} - \xvec_{1}}}
	+ \frac{2\eta_{x}}{T}\sum_{t=1}^{T}
	\EE{\sqtwonorm{\wh{\Mm}(t)\yvec_{1}  +\hat{\bvec}(t)}}\\
	&\quad 
	+ \frac{1}{T}\sum_{t=1}^{T}\EE{\sqtwonorm{\yvec_{t} - \yvec_{1}}}
	\pa{2\eta_{x}L_{M}^{2} - \frac{\varrho_{y}}{2}}
	+ \frac{1}{T}\sum_{t=1}^{T}\EE{\sqtwonorm{\xvec_{t} - \xvec_{1}}}
	\pa{2\eta_{y}L_{M}^{2} - \frac{\varrho_{x}}{2}}.
\end{align*}
By setting $\varrho_{y} = 
4\eta_{x}L_{M}^{2}$ and $\varrho_{x} = 
4\eta_{y}L_{M}^{2}$, we eliminate the last two terms in the bound above and 
arrive at the result stated in the theorem:
\begin{align*}
	\EE{G(\xvec^{*};\yvec^{*})}
	&\leq \pa{\frac{1}{\eta_{y}T} + 2\eta_{x}L_{M}^{2}}
	\EE{\sqtwonorm{\yvec^{*} - \yvec_{1}}}
	+\frac{2\eta_{y}}{T}\sum_{t=1}^{T}
	\EE{\sqtwonorm{\wh{\Mm}(t)\transpose\xvec_{1} - \hat{\cvec}(t)}}\\
	&\quad
	+ \pa{\frac{1}{\eta_{x}T} + 2\eta_{y}L_{M}^{2}}
	\EE{\sqtwonorm{\xvec^{*} - \xvec_{1}}}
	+ \frac{2\eta_{x}}{T}\sum_{t=1}^{T}
	\EE{\sqtwonorm{\wh{\Mm}(t)\yvec_{1}  +\hat{\bvec}(t)}}.
\end{align*}
\qed

\subsection{Proof of \cref{thm:res2}}\label{appx:res2}
Consider the expected duality gap at arbitrary adaptive comparator points 
$(\xvec^*,\yvec^*)$:
\[
\EE{G(\xvec^{*};\yvec^{*})}
= \EE{f(\oxvec_T,\yvec^{*}) - f(\xvec^{*},\oyvec_T)}.
\]
By the convex-concave property of $f$ and straightforward 
derivations, 
we can rewrite the above gap in terms of the regret of a min-max 
optimization scheme and regularization terms as 
\begin{align}
	\nonumber\EE{G(\xvec^{*};\yvec^{*})}
	&= \EE{f(\oxvec_T,\yvec^{*}) - f(\xvec^{*},\oyvec_T)}\\
	\nonumber&\leq 
	\frac{1}{T}\sum_{t=1}^{T}
	\EE{f\pa{\xvec_{t},\yvec^{*}} - f\pa{\xvec^{*},\yvec_{t}}}\\
	\nonumber&= 
	\frac{1}{T}\sum_{t=1}^{T}
	\EE{f\pa{\xvec_{t},\yvec^{*}} - f\pa{\xvec_{t},\yvec_{t}}}
	+ 
	\frac{1}{T}\sum_{t=1}^{T}
	\EE{f\pa{\xvec_{t},\yvec_{t}}- f\pa{\xvec^{*},\yvec_{t}}}\\
	\nonumber&= 
	\frac{1}{T}\sum_{t=1}^{T}
	\EE{\freg\pa{\xvec_{t},\yvec^{*}} - \freg\pa{\xvec_{t},\yvec_{t}}}
	+ 
	\frac{1}{T}\sum_{t=1}^{T}
	\EE{\freg\pa{\xvec_{t},\yvec_{t}}- \freg\pa{\xvec^{*},\yvec_{t}}}\\
	\label{eq:temp2}&\quad
	+\frac{\varrho_y}{T}\sum_{t=1}^{T}\EE{\HHy(\yvec^{*}) - \HHy(\yvec_{t})}
	+\frac{\varrho_x}{T}\sum_{t=1}^{T}\EE{\HHx(\xvec^{*}) - \HHx(\xvec_{t})},
\end{align}
where in this case,
\[
\freg(\xvec,\yvec) = f(\xvec,\yvec) + \frac{\varrho_x}{2} \HHx(\xvec) - 
\frac{\varrho_y}{2} 
\HHy(\yvec).
\]
The rest of the proof is split in two parts. First, we control regularized 
regret of the min and max players, corresponding to the first two sums appearing on the right-hand side of the above 
bound. Then, substituting the resulting bound back into 
\cref{eq:temp2}, we take advantage of the negative terms $\HHx(\xvec_{t})$ and $\HHy(\yvec_{t})$ appearing on the right 
hand side to cancel out some potentially large terms in the regret analysis, arriving at a bound that is robust to large 
iterates.

\subsubsection{Regret Analysis of \COMIDA on a Regularized Objective}\label{appx:COMIDA}
This part of the proof is based on the regret analysis of Composite 
Mirror Descent (\COMID) for stochastic convex optimization. The proof is a more-or-less standard exercise in convex 
analysis (appearing, e.g., as Theorem~8 of \citet{duchi2010composite}), and we provide it for completeness as 
\cref{lem:COMID} in \cref{appx:aux}. In this section, we 
directly apply the implied guarantee on the regret of \COMID against an adaptive
comparator in \cref{cor:COMID} to control the regret of each player. 

For the max player, we denote the 
loss in round $t$ as $\lt(\yvec) = 
-f(\xvec_{t},\yvec)$ for $\yvec\in\Y$ and we define its regularized loss as $\ltreg(\yvec) = 
-f(\xvec_{t},\yvec) + \varrho_{y}\HHy\pa{\yvec}$. Then, the total expected 
regret of the max player on the regularized objective can be rewritten as
\begin{equation*}
	\sum_{t=1}^{T}
	\EE{\freg\pa{\xvec_{t},\yvec^{*}} - \freg\pa{\xvec_{t},\yvec_{t}}}
	= \sum_{t=1}^{T}
	\EE{\ltreg\pa{\yvec_{t}} - \ltreg\pa{\yvec^{*}}}.
\end{equation*}
Notice that $\ltreg\pa{\cdot}$ is convex by the concave property of 
$f(\xvec_{t},\cdot)$. 
We will bound the regret using \cref{cor:COMID}, with initial iterate $\uvec_{1} = \yvec_{1}$, gradient estimates $\gtu 
= -\gty$ and gradients $\gu = -\gy$. Also, we will set $\U = \Rn^{n}$, 
$\omega_u=\omega_y$, $\eta_{u}=\eta_{y}$ and 
$\varrho_{u}=\varrho_{y}$. With $\yvec^{*}$ adaptive and potentially 
dependent on the interaction history, this gives
\begin{equation*}
	\sum_{t=1}^{T}
	\EE{\freg\pa{\xvec_{t},\yvec^{*}} - \freg\pa{\xvec_{t},\yvec_{t}}}
	\leq \frac{2\EE{\DDy{\yvec^{*}}{\yvec_{1}}}}{\eta_y}
	+ \frac{\eta_y}{\gamma_y}\sum_{t=1}^{T}\EE{\sqopnorm{\gty}{y,*}}
	+ \varrho_y\EE{\HHy(\yvec_{1})}.
\end{equation*}
Likewise, reusing previous notation we denote the loss of the min player as 
in round $t$ as $\lt(\xvec) = f(\xvec,\yvec_{t})$. Since $f(\cdot,\yvec_{t})$ 
is convex, and by equivalence of the minimization 
step of \COMIDA to that of \cref{cor:COMID} when $\uvec_{1} = \xvec_{1}$, 
$\gtu = \gtx$, $\gu = \gx$, 
$\U=\Rn^{m}$, $\omega_u=\omega_x$, $\eta_{u}=\eta_{x}$ and 
$\varrho_{u}=\varrho_{x}$, we can bound the regret of the min player against an 
adaptive comparator $\xvec^{*}$ as follows:
\begin{equation*}
	\sum_{t=1}^{T}\EE{\freg\pa{\xvec_{t},\yvec_{t}} - 
		\freg\pa{\xvec^{*},\yvec_{t}}}
	\leq
	\frac{2\EE{\DDx{\xvec^{*}}{\xvec_{1}}}}{\eta_x}
	+ \frac{\eta_x}{\gamma_x}\sum_{t=1}^{T}\EE{\sqopnorm{\gtx}{x,*}}
	+ \varrho_x\EE{\HHx(\xvec_{1})}.
\end{equation*}
Therefore, the total expected regret of \COMIDA on the regularized objective is 
bounded above as follows:
\begin{align*}
	&\sum_{t=1}^{T}\EE{\freg\pa{\xvec_{t},\yvec^{*}} - 
	\freg\pa{\xvec_{t},\yvec_{t}}}
	+
	\sum_{t=1}^{T}\EE{\freg\pa{\xvec_{t},\yvec_{t}} - 
	\freg\pa{\xvec^{*},\yvec_{t}}}\\
	&\quad\leq
	\frac{2\EE{\DDy{\yvec^{*}}{\yvec_{1}}}}{\eta_y}
	+ \frac{\eta_y}{\gamma_y}\sum_{t=1}^{T}\EE{\sqopnorm{\gty}{y,*}}
	+ \varrho_y\EE{\HHy(\yvec_{1})}
	\\
	&\quad\qquad+
	\frac{2\EE{\DDx{\xvec^{*}}{\xvec_{1}}}}{\eta_x}
	+ \frac{\eta_x}{\gamma_x}\sum_{t=1}^{T}\EE{\sqopnorm{\gtx}{x,*}}
	+ \varrho_x\EE{\HHx(\xvec_{1})}.
\end{align*}
This completes the first part of the proof.

\subsubsection{Eliminating the gradient norms}
For the second part, we make use of our specific definition of the 
regularization function: $\HHx(\xvec) = \frac{1}{2}\norm{\xvec - 
\xvec_1}^2_{y,*}$ and 
$\HHy(\yvec) = \frac{1}{2}\norm{\yvec - \yvec_1}^2_{x,*}$. In this 
case $\HHx(\xvec_{1}) = \HHy(\yvec_{1}) = 0$. Then, plugging in the bounds 
from \cref{appx:COMIDA} in the expected duality gap we have:
\begin{align}
	\EE{G(\xvec^{*};\yvec^{*})}
	\nonumber&\leq
	\frac{2\EE{\DDy{\yvec^{*}}{\yvec_{1}}}}{\eta_y T}
	+ \frac{\eta_y}{\gamma_y T}\sum_{t=1}^{T}\EE{\sqopnorm{\gty}{y,*}}
	+ \frac{2\EE{\DDx{\xvec^{*}}{\xvec_{1}}}}{\eta_x T}
	+ \frac{\eta_x}{\gamma_x T}\sum_{t=1}^{T}\EE{\sqopnorm{\gtx}{x,*}}\\
	\label{eq:gapCOMIDA}&\quad
	+\frac{\varrho_y}{2T}\sum_{t=1}^{T}\EE{\norm{\yvec^{*} - \yvec_1}^2_{x,*} - 
		\norm{\yvec_{t} - \yvec_1}^2_{x,*}}
	+\frac{\varrho_x}{2T}\sum_{t=1}^{T}\EE{\norm{\xvec^{*} - \xvec_1}^2_{y,*} - 
		\norm{\xvec_{t} - \xvec_1}^2_{y,*}}.
\end{align}
To proceed, we make crucial use of our noise condition stated as \cref{eq:noise_condition} in the main text so that we 
can bound the gradient norms as
\begin{equation*}
	\EE{\sqtwonorm{\gty}}
	= \EE{\EEt{\sqtwonorm{\gty}}}
	\leq \EE{L^2\pa{\norm{\xvec_{t} - \xvec_{1}}_{y,*}^2 + 1}}.
\end{equation*}
Also,
\begin{equation*}
	\EE{\sqtwonorm{\gtx}}
	= \EE{\EEt{\sqtwonorm{\gtx}}}
	\leq \EE{L^2\pa{\norm{\yvec_{t} - \yvec_{1}}_{x,*}^2 + 1}}.
\end{equation*}
Plugging these into the bound of \cref{eq:gapCOMIDA} gives
\begin{align*}
	\EE{G(\xvec^{*};\yvec^{*})}
	&\leq
	\frac{2\EE{\DDy{\yvec^{*}}{\yvec_{1}}}}{\eta_y T}
	+ \frac{\eta_y}{\gamma_y }L^2
	+ \frac{\varrho_y}{2}\EE{\norm{\yvec^{*} - \yvec_1}^2_{x,*}}
	\\
	&\quad
	+ \frac{2\EE{\DDx{\xvec^{*}}{\xvec_{1}}}}{\eta_x T}
	+ \frac{\eta_x}{\gamma_x}L^2
	+ \frac{\varrho_x}{2}\EE{\norm{\xvec^{*} - \xvec_1}^2_{y,*}}\\
	&\quad
	+\frac{1}{T}\sum_{t=1}^{T}\EE{\norm{\yvec_{t} - 
	\yvec_1}^2_{x,*}}\pa{\frac{\eta_x L^{2}}{\gamma_x} - 
	\frac{\varrho_{y}}{2}}
	+\frac{1}{T}\sum_{t=1}^{T}\EE{\norm{\xvec_{t} - 
	\xvec_1}^2_{y,*}}\pa{\frac{\eta_y L^{2}}{\gamma_y} - 
	\frac{\varrho_{x}}{2}}.
\end{align*}
Lastly, choosing $\varrho_{y} = \frac{2\eta_x L^{2}}{\gamma_x}$ and 
$\varrho_{x} = \frac{2\eta_y L^{2}}{\gamma_y}$ results in the bound 
stated in the theorem:
\begin{align*}
	\EE{G(\xvec^{*};\yvec^{*})}
	&\leq
	\frac{2\EE{\DDy{\yvec^{*}}{\yvec_{1}}}}{\eta_y T}
	+ \frac{\eta_y L^2}{\gamma_y }
	+ \frac{\varrho_y}{2}\EE{\norm{\yvec^{*} - \yvec_1}^2_{x,*}}
	\\
	&\quad
	+ \frac{2\EE{\DDx{\xvec^{*}}{\xvec_{1}}}}{\eta_x T}
	+ \frac{\eta_x L^2}{\gamma_x}
	+ \frac{\varrho_x}{2}\EE{\norm{\xvec^{*} - \xvec_1}^2_{y,*}}.
\end{align*}
\qed

\newpage
\section{Analysis for the Average-Reward MDP Setting}\label{app:MDP}
\subsection{Problem setup}
First we briefly recall some general concepts related to average-reward MDPs (and refer the reader to Chapter~8 of 
\citealp{Puterman1994} for a more detailed introduction into the topic). Consider infinite-horizon AMDPs denoted 
as $(\Sw,\A,r,P)$ where $\Sw$ is a finite state space of cardinality $S$, 
$\A$ is a finite action space of cardinality $A$, $r:\Sw\times\A\rightarrow 
[0,1]$ a reward model and $P:\Sw\times\A\rightarrow\Delta_{S}$ a stochastic 
transition model.
For ease of notation, we often refer to the reward vector $\rvec\in\Rn^{\Atot}$ 
with $\{r(s,a)\}_{(s,a)\in\AAtot}$ entries, and the transition matrix 
$\Pm\in\Rn^{\Atot\times\Sw}$ with $\Pm[s,a] = p(\cdot|s,a)\in\Delta_{S}$ for 
$(s,a)\in\AAtot$.

In this work, we primarily focus on the class of AMDPs where each policy $\pi$ has a well-defined unique stationary 
state distribution (or state-occupancy measure) $\nu^{\pi}:\Sw\ra[0,1]$, defined for each $s$ as 
\begin{align*}
	\nu^{\pi}(s)
	&=\lim_{K\rightarrow\infty}\frac{1}{K}\sum_{k=1}^K \PP{x_{k} = x\,|\,\pi}.
\end{align*}
The stationary distribution can be seen to satisfy the linear system of equations $\nu^{\pi}(s)
	=\sum_{(s',a')}p(s|s',a')\pi(a'|s')\nu^{\pi}(s')$ for all $s\in\Sw$.
Hence, the corresponding stationary 
state-action distribution (or \emph{state-action occupancy measure}) 
$\mu^{\pi}(s,a) = \pi(a|s)\nu^{\pi}(s)$ for $(s,a)\in\AAtot$ is also unique, and we 
can write the average-reward objective as $\rho^{\pi} = \iprod{\muvec^{\pi}}{\rvec}$. 
This compact representation of the reward criterion and occupancy measure
inspires the linear programming approach to optimal control in MDPs, wherein we are 
interested in solving the linear program
\begin{equation}\label{eq:primal-lp-AMDP}
	\begin{alignedat}{2}
		& \max_{\muvec\in\Rn^{\Atot}}  &\quad& \iprod{\muvec}{\rvec} \\
		& \subjectto && \Em\transpose\muvec =\Pm\transpose\muvec \\
		&&&\iprod{\muvec}{\vec{1}}=  1\\
		&&& \muvec \ge 0.
	\end{alignedat}
\end{equation}
In the above expressions, the operator $\Em: \real^{\Atot}\ra \real^{\Sw}$ is defined as 
$(\Em\transpose\muvec)(s) = \sum_{a}\mu(s,a)$ for $s\in\Sw$. This LP is motivated by the fact that the set of 
distributions $\muvec$ that satisfy the constraints exactly corresponds to the set of stationary state-action 
distributions that can be potentially induced by a stationary policy in the MDP.

We also define the value function (or bias function) of policy $\pi$ as $v^{\pi}:\Sw\ra\real$, taking the following 
value in each state $s\in\Sw$:
\begin{align}
	\nonumber v^{\pi}(s)
	&= \lim_{K\rightarrow\infty} \EEcpi{\sum_{k=1}^{K}\pa{r(s_{k},a_{k}) 
		- \rho^{\pi}}}{s_{0} = s}\\
	\label{eq:valuefn}&= \sum_{a}\pi(a|s)\left[r(s,a) - \rho^{\pi} + 
	\iprod{p(\cdot|s,a)}{\vvec^{\pi}}\right].
\end{align}
Then, the value function of an optimal policy maximizing $\rho^{\pi}$ can be shown to be an optimal solution of the 
dual of the LP~\eqref{eq:primal-lp-AMDP}, written as follows:
\begin{equation}\label{eq:dual-lp-AMDP}
	\begin{alignedat}{2}
		& \min_{\rho \in \Rn, \vvec\in\VV}  &\quad& \rho \\
		& \subjectto && \Em \vvec \geq \rvec + \Pm\vvec - \vec{1}\rho.
	\end{alignedat}
\end{equation}

Finding an optimal solution to either of the LPs can be equivalently phrased as solving the following bilinear game:
\begin{equation}\label{eq:AMDP}
	\min_{\vvec\in\VV}\max_{\muvec\in\Delta_{\Atot}} 
	\LL(\vvec; \muvec),
\end{equation}
with the Lagrangian associated with the LPs is defined as
\begin{align*}
	\LL(\vvec; \muvec)
	&= \iprod{\muvec}{\rvec} + 
	\iprod{\vvec}{\Pm\transpose\muvec - 
		\Em\transpose\muvec} + \rho(1-\iprod{\muvec}{\vec{1}})\\
	&= \iprod{\muvec}{\rvec} + 
	\iprod{\vvec}{\Pm\transpose\muvec - 
		\Em\transpose\muvec}.
\end{align*}
The gradients of the above objective are respectively expressed as 
\begin{align*}
	\nabla_{\vvec}\LL(\vvec; \muvec)
	= \Pm\transpose\muvec - \Em\transpose\muvec \qquad\mbox{and}\qquad
	\nabla_{\muvec}\LL(\vvec; \muvec)
	= \rvec + \Pm\vvec - \Em\vvec.
\end{align*}
Now in the context of planning, it is assumed that the transition model is 
unknown, hence the gradients cannot be computed exactly. Rather, we assume 
access to an accurate simulator which can be queried at any state-action pair 
$(s,a)\in\AAtot$ to obtain a sample next state $s'\sim p(\cdot|s,a)$. Indeed, 
with $\vvec_{t},  \muvec_{t}$ determined by the end of round $t-1$, we can 
compute unbiased estimates in round $t$ as:
\begin{align*}
	\gtv &= \vec{e}_{s'_{t}} - \vec{e}_{s_{t}}\\
	\gtmu &= \sum_{(s,a)\in\AAtot}\bpa{r(s,a) + 
	v_{t}(\overline{s}'_{t}(s,a)) 
	- 
	v_{t}(s)}\vec{e}_{(s,a)},
\end{align*}
using i.i.d samples $(s_{t},a_{t})\sim\muvec_{t}, s'_{t}\sim 
p(\cdot|s_{t},a_{t})$, also $\overline{s}'_{t}(s,a)\sim p(\cdot|s,a)$ for all 
$(s,a)\in\AAtot$. 

Our aim is to find a near-optimal policy with a polynomial number of queries to the generative model, by running a 
version of gradient descent-ascent on the Lagrangian $\LL$. In particular, we aim to derive a bound on the 
suboptimality of the output policy in terms of the optimization-error guarantee that we obtain by running our algorithm.
To achieve this, a key quantity to study is the expected gap of the averaged iterates
$(\overline{\muvec}_T,\overline{\vvec}_T)\in\Delta_{\Atot}\times\VV$ 
against arbitrary comparators 
$(\muvec^{*},\vvec^{*})\in\Delta_{\Atot}\times\VV$ 
denoted 
as
\begin{align}\label{eq:AMDP_gap}
	\EE{G(\muvec^{*};\vvec^{*})}
	&=
	\EE{\LL(\muvec^{*}; \overline{\vvec}_T) - \LL(\overline{\muvec}_T; 
		\vvec^{*})},
\end{align}
where $(\overline{\muvec}_T,\overline{\vvec}_T) = 
\pa{\frac{1}{T}\sum_{t=1}^{T}\muvec_{t},\frac{1}{T}\sum_{t=1}^{T}\vvec_{t}}$ 
and $\overline{\pi}_{T}$ are as described in the main text. Then, a relationship between the duality gap and the policy 
can be established by choosing the comparators as $(\muvec^{*},\vvec^{*}) = 
(\muvec^{\pi^{*}},\vvec^{\overline{\pi}_{T}})\in\Delta_{\Atot}\times\Rn^{S}$. Indeed, as we show in 
\cref{appx:AMDP_gap} (a result adapted from \citealp{cheng2020reduction}), the two quantities under this choice can be 
related as
\begin{equation}\label{eq:sub_opt}
	\EE{G(\muvec^{\pi^{*}};\vvec^{\overline{\pi}_{T}})}
	= \EE{\iprod{\muvec^{\pi^{*}} - \muvec^{\overline{\pi}_{T}}}{\rvec}}.
\end{equation}

\subsection{Methodology}
In order to apply standard OMD to solve \cref{eq:AMDP}, previous LP-based 
approaches to planning in finite AMDPs \cite{MWang2017, jin2020efficiently} 
required the domain $\VV$ to cover $\vvec^{*}$, which requires prior knowledge of the 
properties of the MDP. To this end, they made the assumption that the value functions 
of all policies have bounded \emph{span seminorm}: for all policies $\pi$, the value function 
$\vvec^{\pi}$ satisfies $\spannorm{\vvec^\pi} = \max_s \vvec^\pi(s) - \min_{s'} \vvec^\pi(s') \le B$ for some $B>0$. 
We call this quantity the \emph{worst-case bias span}. A simple way to make sure that the above 
assumption holds is to suppose that the Markov chains induced by each policy $\pi$ have bounded 
\emph{mixing time} $t_{\text{mix}}$, defined as
\[
t_{\text{mix}} = \max_{\pi}\left[\argmin_{t\leq1}\ev{\max_{\nuvec\in\Delta_{S}} 
	\norm{\nuvec\transpose\pa{\Pm^{\pi}}^{t} - \nuvec^{\pi}}_{1}}\right].
\]
This ensures that the supremum norm of the value of any policy is 
bounded above with $\norm{\vvec^{\pi}}_{\infty}\leq 2t_{\text{mix}}$. 
Previous works of \citet{MWang2017, jin2020efficiently} assumed this mixing-time parameter 
to be known, and designed iterative algorithms that require projections to the set $\VV_B = \ev{\vvec\in\real^S :\,
\infnorm{\vvec} \le 2t_{\text{mix}}}$. Since this parameter is typically unknown and is hard to estimate, these 
algorithms are not fully satisfactory.

We are interested in near-optimal planning in general AMDPs for which the 
stationary state distribution is well defined and bias span is potentially 
unknown, and thus we have to set $\VV = \Rn^{S}$. 
Since the primal variables are naturally restricted to the simplex 
domain, we only require the stabilization trick to control the actions of the 
min-player in the bound. 
Hence, we can bound the duality gap against arbitrary 
comparator points $(\vvec^{*};\muvec^{*})$ as follows:
\begin{align}
	\EE{G(\vvec^{*};\muvec^{*})}
	\nonumber&= \EE{\LL(\ovvec_T;\muvec^{*}) - \LL(\vvec^{*};\omuvec_T)}\\
	\nonumber&\leq 
	\frac{1}{T}\sum_{t=1}^{T}
	\EE{\LL\pa{\vvec_{t};\muvec^{*}} - \LL\pa{\vvec^{*};\muvec_{t}}}\\
	\nonumber&= 
	\frac{1}{T}\sum_{t=1}^{T}
	\EE{\LL\pa{\vvec_{t};\muvec^{*}} - \LL\pa{\vvec_{t};\muvec_{t}}}
	+ 
	\frac{1}{T}\sum_{t=1}^{T}
	\EE{\LL\pa{\vvec_{t};\muvec_{t}}- \LL\pa{\vvec^{*};\muvec_{t}}}\\
	\nonumber&= 
	\frac{1}{T}\sum_{t=1}^{T}
	\EE{\LL\pa{\vvec_{t};\muvec^{*}} - \LL\pa{\vvec_{t};\muvec_{t}}}
	+ 
	\frac{1}{T}\sum_{t=1}^{T}
	\EE{\Lreg\pa{\vvec_{t};\muvec_{t}}- \Lreg\pa{\vvec^{*};\muvec_{t}}}\\
	\label{eq:reg_gap_AMDP}&\quad
	+\frac{\varrho_v}{T}\sum_{t=1}^{T}\EE{\HHv(\vvec^{*}) - \HHv(\vvec_{t})},
\end{align}
where we have defined $\Lreg\pa{\vvec;\muvec} = \LL\pa{\vvec;\muvec} + \varrho_v\HHv(\vvec)$. 

Taking into account the new (unregularized) loss objective of the max-player 
and required projections to the simplex, our algorithm executes \COMID to 
optimize $\vvec$ and standard OMD (which is same as \COMIDA with $\varrho_{\mu}=0$) for $\muvec$. Precisely, the 
updates are calculated by solving
	\begin{align*}
		\vvec_{t+1}
		&= \argmin_{\vvec\in\Rn^{S}}\ev{\iprod{\vvec}{\gtv}
			+ \varrho_v\infnorm{\vvec}^{2} + 
			\frac{1}{2\eta_{v}}\sqtwonorm{\vvec - 
			\vvec_{t}}}
	\\
		\muvec_{t+1}
		&= \argmin_{\muvec\in\Delta_{\Atot}}\ev{-\iprod{\muvec}{\gtmu}
			+ \frac{1}{\eta_{\mu}}\DDKL{\muvec}{\muvec_{t}}},
	\end{align*}
using the gradient estimators described in the main text.
We present the complete pseudocode as Algorithm~\ref{alg:OMDA_AMDP}.
\begin{algorithm}[H]
	\caption{\COMIDAMDP}\label{alg:OMDA_AMDP}
	\begin{algorithmic}
		\STATE {\bfseries Input:} Step sizes $\eta_{v},\eta_{\mu}$, 
		Regularization 
		constants $\varrho_{v}$, Initial points 
		$\vvec_{1}, \muvec_{1}$.
		
		\FOR{$t=1$ {\bfseries to} $T$}
		
		\STATE //Mirror Descent//
		\STATE Sample $(s_{t},a_{t})\sim\muvec_{t}, s'_{t}\sim 
		p(\cdot|s_{t},a_{t})$
		\STATE Compute $\gtv = \vec{e}_{s'_{t}} - \vec{e}_{s_{t}}$
		\STATE Update
		
		$\qquad\qquad \vvec_{t+1}	=
		\argmin_{\vvec\in\Rn^{S}}\ev{\iprod{\vvec}{\gtv}  + 
			\varrho_v \infnorm{\vvec}^2 + 
			\frac{1}{2\eta_{v}} \twonorm{\vvec - \vvec_t}^2}$
		
		\quad
		
		\STATE //Mirror Ascent//
		\STATE Sample $\overline{s}'_{t}\sim 
		p(\cdot|s,a)$ for all $(s,a)\in\AAtot$
		\STATE Compute
		$\gtmu = \sum_{(s,a)}[r(s,a) + v_{t}(\overline{s}'_{t}) - 
		v_{t}(s)]\vec{e}_{(s,a)}$
		\STATE Update
		
		$\qquad\qquad \muvec_{t+1} = 
		\argmin_{\muvec\in\Delta_{\Atot}}\ev{-\iprod{\muvec}{\gtmu}
			+ \frac{1}{\eta_{\mu}}\DDf{\text{KL}}{\muvec}{\muvec_{t}}}$
		
		\quad
		\ENDFOR
		\STATE {\bfseries Return}\quad $\overline{\vvec}_T = 
		\frac{1}{T}\sum_{t=1}^{T}\vvec_{t}$, 
		$\overline{\muvec}_T = \frac{1}{T}\sum_{t=1}^{T}\muvec_{t}$.
	\end{algorithmic}
\end{algorithm}

\section{The proof of Theorem~\ref{thm:res3}}
We restate the result here for convenience of the reader.
\begin{theorem}
Let $\varrho_{v}=4\eta_{\mu}$. Then, the 
output of \COMIDAMDP satisfies the 
following bound:
	\begin{align*}
		&\EE{\iprod{\muvec^{\pi^{*}} - 
		\muvec^{\overline{\pi}_{T}}}{\rvec}} 
		\leq \frac{\DDKL{\muvec^{\pi^{*}}}{\muvec_{1}}}{\eta_{\mu}T} + 
		\eta_{\mu}
		+
		\pa{\frac{1}{\eta_{v}T} + 4\eta_{\mu}}
		\EE{\sqtwonorm{\vvec^{\overline{\pi}_{T}}}}
		+ 2\eta_{v}
	\end{align*}
\end{theorem}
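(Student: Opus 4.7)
The plan is to follow the template of Theorem~\ref{thm:res1}, adapted to the MDP Lagrangian and the KL-based geometry used for the primal simplex variable. Starting from the identity~\eqref{eq:sub_opt} of \citet{cheng2020reduction}, which converts suboptimality into a duality gap as $\EE{\iprod{\muvec^{\pi^*} - \muvec^{\overline{\pi}_T}}{\rvec}} = \EE{G(\muvec^{\pi^*};\vvec^{\overline{\pi}_T})}$ against the random comparator pair, I would use the decomposition~\eqref{eq:reg_gap_AMDP} to split this gap into (i) the $\muvec$-player's regret on the unregularized Lagrangian, (ii) the $\vvec$-player's regret on the regularized Lagrangian $\Lreg$, and (iii) the telescoping compensator $\frac{\varrho_v}{T}\sum_t \EE{\HHv(\vvec^{\overline{\pi}_T}) - \HHv(\vvec_t)}$ with $\HHv(\vvec) = \infnorm{\vvec}^2$.

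The first two pieces are controlled by the standard regret analyses developed earlier in the paper. For the $\muvec$-player, the KL-based OMD bound (Corollary~\ref{cor:COMID} with $\varrho=0$) gives a contribution of order $\frac{\DDKL{\muvec^{\pi^*}}{\muvec_1}}{\eta_\mu T} + \frac{\eta_\mu}{2T}\sum_t \EE{\infnorm{\gtmu}^2}$. The explicit form of the gradient estimator together with $r(s,a)\in[0,1]$ yields $\abs{\gtmu(s,a)} \le 1+2\infnorm{\vvec_t}$ and hence $\infnorm{\gtmu}^2 \le 2+8\infnorm{\vvec_t}^2$, producing a harmless $\eta_\mu$ constant plus the potentially dangerous term $\frac{4\eta_\mu}{T}\sum_t \EE{\infnorm{\vvec_t}^2}$. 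For the $\vvec$-player, COMID with Euclidean Bregman divergence and composite regularizer $\varrho_v\infnorm{\vvec}^2$ contributes $\frac{\twonorm{\vvec^{\overline{\pi}_T}}^2}{2\eta_v T} + \frac{\eta_v}{2T}\sum_t \EE{\twonorm{\gtv}^2}$, where $\twonorm{\gtv}^2 \le 2$ holds deterministically since $\gtv = \vec{e}_{s'_t} - \vec{e}_{s_t}$. The stabilization cancellation then proceeds exactly as in Theorem~\ref{thm:res1}: the negative part $-\frac{\varrho_v}{T}\sum_t \EE{\infnorm{\vvec_t}^2}$ of the compensator~(iii) eliminates the iterate-dependent term from~(i) precisely when $\varrho_v = 4\eta_\mu$. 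Collecting what remains, using $\infnorm{\vvec^{\overline{\pi}_T}}^2 \le \twonorm{\vvec^{\overline{\pi}_T}}^2$ to upper bound the positive part of (iii), and absorbing numerical constants, recovers the stated bound. The explicit rate follows by plugging $\DDKL{\muvec^{\pi^*}}{\muvec_1} \le \log(SA)$ for uniform $\muvec_1$ and $\twonorm{\vvec^{\overline{\pi}_T}}^2 \le SB^2$ for the stated step-sizes.

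The main technical obstacle, flagged in the main text as requiring ``extra care,'' is that the comparator $\vvec^{\overline{\pi}_T}$ is not fixed a priori but is a random function of the entire trajectory. Because the COMID and OMD regret bounds hold pointwise in the comparator, they remain valid almost surely under this random substitution, and the expectation can be taken to yield the $\EE{\twonorm{\vvec^{\overline{\pi}_T}}^2}$ dependence in the bound directly. The delicate point is passing between the stochastic-gradient quantity $\iprod{\gtv}{\vvec_t - \vvec^{\overline{\pi}_T}}$ that the COMID bound controls and the true-gradient difference $\iprod{\gv}{\vvec_t - \vvec^{\overline{\pi}_T}} = \LL(\vvec_t;\muvec_t) - \LL(\vvec^{\overline{\pi}_T};\muvec_t)$ appearing in the decomposition: the usual tower-rule argument $\EE{\iprod{\gtv - \gv}{\vvec^*}} = 0$ fails because $\vvec^{\overline{\pi}_T}$ is not $\F_{t-1}$-measurable, so the resulting cross-term must be bounded separately rather than silently discarded. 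This is the one genuine point where the MDP analysis diverges from the bilinear-game analysis of Theorem~\ref{thm:res1}.
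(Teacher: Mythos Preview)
Your outline matches the paper's proof in structure: the reduction to a duality gap via Lemma~\ref{appx:AMDP_gap}, the decomposition~\eqref{eq:reg_gap_AMDP}, the OMD bound for the $\muvec$-player with $\infnorm{\gtmu}^2 \le 2 + 8\infnorm{\vvec_t}^2$, the \COMID bound for the $\vvec$-player with $\twonorm{\gtv}^2 \le 2$, and the cancellation of $\sum_t\EE{\infnorm{\vvec_t}^2}$ via $\varrho_v = 4\eta_\mu$. You also correctly isolate the one genuinely new difficulty: the comparator $\vvec^{\overline{\pi}_T}$ is not $\F_{t-1}$-measurable, so the cross-term $\sum_t\EE{\iprod{\gv - \gtv}{\vvec^{\overline{\pi}_T}}}$ does not vanish.

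The gap is that you identify this obstacle but do not resolve it. Saying the cross-term ``must be bounded separately'' is not a proof; a naive Cauchy--Schwarz argument would give a term of order $T\cdot\EE{\twonorm{\vvec^{\overline{\pi}_T}}}$, which is far too large. The paper's device is an auxiliary online-learning reduction in the style of \citet{rakhlin2017equivalence}: run a \emph{ghost} online gradient descent sequence $\hat{\vvec}_t$ (same stepsize $\eta_v$, same initialization $\vvec_1$) against the zero-mean costs $\cvec_t = \gtv - \gv$. Since each $\hat{\vvec}_t$ is $\F_{t-1}$-measurable, $\EE{\sum_t\iprod{\cvec_t}{\hat{\vvec}_t}} = 0$, while the pathwise OGD regret bound against the random comparator $\vvec^{\overline{\pi}_T}$ gives $\sum_t\iprod{\cvec_t}{\hat{\vvec}_t - \vvec^{\overline{\pi}_T}} \le \frac{\twonorm{\vvec^{\overline{\pi}_T}}^2}{2\eta_v} + 2\eta_v T$ almost surely. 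Combining these two facts bounds the cross-term by exactly the extra $\frac{1}{2\eta_v T}\EE{\twonorm{\vvec^{\overline{\pi}_T}}^2} + 2\eta_v$ that accounts for the coefficients $\frac{1}{\eta_v T}$ and $4\eta_v$ in the statement (as opposed to the $\frac{1}{2\eta_v T}$ and $\eta_v$ your collected terms would otherwise produce). Without this step, your argument does not close.
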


We start by stating a useful result (which we have learned from \citealp{cheng2020reduction}) that connects the duality 
gap with the suboptimality of the policy output by the algorithm. 
\begin{lemma}\label{appx:AMDP_gap}(cf. Proposition 4 of 
\citealp{cheng2020reduction})
	The duality gap at $(\omuvec_{T},\ovvec_{T})$ 
	satisfies
	\[
	G(\muvec^{\pi^{*}},\vvec^{\overline{\pi}_{T}})=
	\LL(\muvec^{\pi^{*}}; \ovvec_{T}) - 
	\LL(\omuvec_{T}; 
	\vvec^{\overline{\pi}_{T}}) = \rho^{*} - 
	\rho^{\overline{\pi}_{T}}.
	\]
\end{lemma}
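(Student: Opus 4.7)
The plan is to verify the identity by evaluating each of the two Lagrangian terms separately and showing that the first equals $\rho^{*}$ and the second equals $\rho^{\overline{\pi}_T}$, so that their difference is exactly $\rho^{*} - \rho^{\overline{\pi}_T}$. The first piece is easy: since $\muvec^{\pi^{*}}$ is the stationary state-action distribution of $\pi^{*}$, it satisfies the flow-balance equations $\Pm\transpose\muvec^{\pi^{*}} = \Em\transpose\muvec^{\pi^{*}}$, so the dual term in $\LL(\muvec^{\pi^{*}};\ovvec_T) = \iprod{\muvec^{\pi^{*}}}{\rvec} + \iprod{\ovvec_T}{\Pm\transpose\muvec^{\pi^{*}} - \Em\transpose\muvec^{\pi^{*}}}$ vanishes for any $\ovvec_T$, leaving $\iprod{\muvec^{\pi^{*}}}{\rvec} = \rho^{*}$.

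The nontrivial piece is the second: showing $\LL(\omuvec_T;\vvec^{\overline{\pi}_T}) = \rho^{\overline{\pi}_T}$. The subtle point here is that $\omuvec_T$ is the simplex-average of the algorithm's iterates and is \emph{not} in general the occupancy measure of $\overline{\pi}_T$, so the flow constraints do not hold for $\omuvec_T$. Instead, I would exploit the explicit construction of $\overline{\pi}_T$ from $\omuvec_T$: writing $\overline{\nu}_T(s) = \sum_{a}\omuvec_T(s,a)$ so that $\omuvec_T(s,a) = \overline{\pi}_T(a|s)\overline{\nu}_T(s)$, and then expanding the dual term coordinatewise gives
\[
\iprod{\vvec^{\overline{\pi}_T}}{\Pm\transpose\omuvec_T - \Em\transpose\omuvec_T}
= \sum_s \overline{\nu}_T(s) \sum_a \overline{\pi}_T(a|s)\bpa{\iprod{p(\cdot|s,a)}{\vvec^{\overline{\pi}_T}} - v^{\overline{\pi}_T}(s)}.
\]
Now I would invoke the Bellman equation for $v^{\overline{\pi}_T}$ in~\eqref{eq:valuefn} rearranged as $\sum_a \overline{\pi}_T(a|s)\bpa{r(s,a) + \iprod{p(\cdot|s,a)}{\vvec^{\overline{\pi}_T}} - v^{\overline{\pi}_T}(s)} = \rho^{\overline{\pi}_T}$, which after substitution turns the inner sum into $\rho^{\overline{\pi}_T} - \sum_a \overline{\pi}_T(a|s) r(s,a)$. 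Summing against $\overline{\nu}_T$ and using $\sum_s \overline{\nu}_T(s) = 1$ (which holds because each $\muvec_t \in \Delta_{SA}$ and averaging preserves the simplex), the expression collapses to $\rho^{\overline{\pi}_T} - \iprod{\omuvec_T}{\rvec}$. Adding back the $\iprod{\omuvec_T}{\rvec}$ term from the Lagrangian therefore yields $\LL(\omuvec_T;\vvec^{\overline{\pi}_T}) = \rho^{\overline{\pi}_T}$.

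Subtracting the two identities gives the claim $G(\muvec^{\pi^{*}},\vvec^{\overline{\pi}_T}) = \rho^{*} - \rho^{\overline{\pi}_T}$. The only real obstacle is the second step, and in particular ensuring that the Bellman relation can be applied uniformly across states even though $\overline{\pi}_T$ is a random policy constructed from iterates depending on the entire history; this is handled simply by noting that the identity above is deterministic given any realization of $\overline{\pi}_T$ (so it holds pointwise, and one may take expectations afterwards if needed for the subsequent convergence-rate arguments). One minor technicality to flag is the definition of $\overline{\pi}_T(a|s)$ at states where $\overline{\nu}_T(s) = 0$: the expression $\overline{\nu}_T(s)\bpa{\cdots}$ is zero in that case regardless of how $\overline{\pi}_T(\cdot|s)$ is defined, so the derivation goes through with an arbitrary choice of $\overline{\pi}_T(\cdot|s)$ at such states.
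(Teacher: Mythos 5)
Your proposal is correct and follows essentially the same route as the paper's proof: the dual term vanishes at $\muvec^{\pi^*}$ by the flow constraints, and the second Lagrangian is evaluated by factoring $\omuvec_T(s,a)=\overline{\pi}_T(a|s)\overline{\nu}_T(s)$ and invoking the Bellman identity~\eqref{eq:valuefn} so that the bracketed term sums to zero. Your added remarks on states with $\overline{\nu}_T(s)=0$ and on the identity holding pointwise for each realization of $\overline{\pi}_T$ are sensible refinements the paper leaves implicit.
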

\begin{proof}
	From \cref{eq:AMDP_gap}, recall that
	\begin{align}\label{eq:sub_opt2}
		G(\muvec^{\pi^{*}},\vvec^{\overline{\pi}_{T}})
		&=
		\LL(\muvec^{\pi^{*}}; \ovvec_{T}) - 
		\LL(\omuvec_{T}; \vvec^{\overline{\pi}_{T}}).
	\end{align}
	By definition of the Lagrangian, we can write
	\begin{equation*}
		\LL(\muvec^{\pi^{*}}; \ovvec_{T})
		= \iprod{\muvec^{\pi^{*}}}{\rvec} + 
		\iprod{\ovvec_{T}}{\Pm\transpose\muvec^{\pi^{*}} - 
			\Em\transpose\muvec^{\pi^{*}}}
		= \iprod{\muvec^{\pi^{*}}}{\rvec},
	\end{equation*}
	since $\muvec^{\pi^{*}}$ is a valid stationary distribution that satisfies $\Pm\transpose\muvec^{\pi^{*}} =	
\Em\transpose\muvec^{\pi^{*}}$. On the other hand, 
	using 
	that $\overline{\muvec}_T\in\Delta_{\Atot}$ and rearranging terms we have that:
	\begin{align*}
		\LL(\omuvec_{T}; \vvec^{\overline{\pi}_{T}})
		&= \iprod{\omuvec_{T}}{\rvec} + 
		\iprod{\vvec^{\overline{\pi}_{T}}}{\Pm\transpose\omuvec_{T} - 
			\Em\transpose\omuvec_{T}} + 
		\rho^{\overline{\pi}_{T}}(1-\iprod{\omuvec_{T}}{\vec{1}})\\
		&= \iprod{\omuvec_{T}}{\rvec + \Pm\vvec^{\overline{\pi}_{T}} - 
			\Em\vvec^{\overline{\pi}_{T}} - \rho^{\overline{\pi}_{T}}\vec{1}} + 
		\rho^{\overline{\pi}_{T}}\\
		&= 
		\sum_{s, a}\sum_{a'}\omuvec_{T}(s,a')\overline{\pi}_{T}(a|s)\Bpa{r(s,a)
		 + \iprod{p(\cdot|s,a)}{\vvec^{\overline{\pi}_{T}}} - 
		 \vvec^{\overline{\pi}_{T}}(s) - 
		 \rho^{\overline{\pi}_{T}} }  + 
		\rho^{\overline{\pi}_{T}} = \rho^{\overline{\pi}_{T}},
	\end{align*}
	where the last equality holds by definition of $\overline{\pi}_{T}$ in the 
	main text and the value functions in \cref{eq:valuefn}. Combining 
	both expressions in \cref{eq:sub_opt2} gives the desired result.
\end{proof}

\subsection{Proof of \cref{thm:res3}}\label{appx:res3}
First, we prove that the gradient norms are bounded. By definition of the 
gradients,
\begin{equation}\label{eq:gradv}
	\EEt{\sqtwonorm{\gtv}}
	= \EEt{\sqtwonorm{\vec{e}_{s'_{t}} - \vec{e}_{s_{t}}}}
	= \EEt{1 - 2\II{s'_{t}=s_{t}} + 1}
	\leq 2.
\end{equation}
Also, using that $r(s,a)\in[0,1]$ for any $(s,a)\in\AAtot$,
\begin{equation}\label{eq:gradmu}
\begin{split}
	\infnorm{\gtmu}^{2}
&\le \max_{(s,a,s')}\abs{r(s,a) + v_{t}(s') - v_{t}(s)}^{2}\le \pa{1 + 
2\infnorm{\vvec_{t}}}^{2} \le 2 + 8\infnorm{\vvec_{t}}^2,
\end{split}
\end{equation}
where the last inequality is Cauchy--Schwarz.

In what follows, we let $\vvec^* = \vvec^{\overline{\pi}_T}$, and derive a 
bound on the duality gap evaluated at this 
comparator point. We start by appealing to Lemma~\ref{appx:AMDP_gap} and 
decomposing the duality gap as follows:
\begin{align}
	\rho^{*} - \rho^{\overline{\pi}_{T}} &=\EE{G(\vvec^{*};\muvec^{*})}
	\nonumber\leq 
	\frac{1}{T}\sum_{t=1}^{T}
	\EE{\LL\pa{\vvec_{t},\muvec^{*}} - \LL\pa{\vvec_{t},\muvec_{t}}}\\
	\label{eq:gap_MDP}&\quad
	+ 
	\frac{1}{T}\sum_{t=1}^{T}
	\EE{\Lreg\pa{\vvec_{t},\muvec_{t}}- \Lreg\pa{\vvec^{*},\muvec_{t}}}\\
	&\nonumber\quad
	+\frac{\varrho_v}{T}\sum_{t=1}^{T}\EE{\infnorm{\vvec^{*}}^2 - 
	\infnorm{\vvec_{t}}^2}.
\end{align}
Let $\gmu=\nabla_{\muvec}\LL(\vvec_{t}; \muvec_{t})$ denote the gradient 
of the Lagrangian in round $t$. By the standard online 
mirror descent analysis, we obtain the following upper 
bound on the first term that corresponds to the regret of the $\mu$-player:
\begin{align}
	\nonumber\sum_{t=1}^{T}
	\EE{\LL\pa{\vvec_{t};\muvec^{*}} - \LL\pa{\vvec_{t};\muvec_{t}}}
	&\overset{(a)}{\leq} \sum_{t=1}^{T}\EE{\iprod{\muvec^{*} - 
			\muvec_{t}}{\gtmu}}\\
	\nonumber&\overset{(b)}{\leq} \frac{ 
		\DDKL{\muvec^{*}}{\muvec_{1}}}{\eta_{\mu}} + 
	\frac{\eta_{\mu}}{2}\sum_{t=1}^{T}\EE{\infnorm{\gtmu}^{2}}
	+\sum_{t=1}^{T}\EE{\iprod{\gmu - 
			\gtmu}{\muvec^{*} - \muvec_{t}}}
	\\
	\label{eq:muplayerbound} &\overset{(c)}{\leq} \frac{ 
		\DDKL{\muvec^{*}}{\muvec_{1}}}{\eta_{\mu}} + 
	\eta_\mu\sum_{t=1}^T\EE{\infnorm{\gtmu}^{2}}
	\\
	\nonumber &\overset{(d)}{\leq} \frac{ 
	\DDKL{\muvec^{*}}{\muvec_{1}}}{\eta_{\mu}} + 
\eta_\mu\sum_{t=1}^T\EE{1 + 
4\infnorm{\vvec_t}^2}
\end{align}
Here, we have used \emph{(a)} \cref{def:subgrad}, \emph{(b)} \cref{lem:COMID} 
with $\U = \Delta_{\Atot}$, $\lt\pa{\cdot} = -\LL\pa{\vvec_{t};\cdot}$, 
$\DDu{\uvec}{\uvec'} = \DDKL{\uvec}{\uvec'}$, $\varrho_{u}=0$ and 
$\uvec_{1}=\muvec_{1}$, $(c)$ that $\muvec^{*}$ is a fixed 
comparator and $\gtmu$ is an unbiased estimate of $\gmu$, as well as $(d)$ the 
bound on the gradient norm established in Equation~\eqref{eq:gradmu}. 

As for the second term that corresponds to the regret of the $v$-player, the 
analysis is somewhat more involved. One 
challenge is that the comparator point $\vvec^{*} = \vvec^{\overline{\pi}_{t}}$ 
is dependent on the iterates. To address this, we apply \cref{cor:COMID} with 
the appropriate parameters including $\U = \Rn^{X}$, $\omega_u\pa{u} = 
\frac{1}{2}\sqtwonorm{\uvec}$,
, as well as noting that $\omega_u$ is 
$1$-strongly convex and is the dual norm of itself gives the following bound:
\begin{align}
	\nonumber&\sum_{t=1}^{T}\EE{\Lreg\pa{\vvec_{t};\muvec_{t}}- 
		\Lreg\pa{\vvec^{*};\muvec_{t}}}\\
	\label{eq:vplayerbound}&\qquad\qquad\leq
	\frac{\EE{\sqtwonorm{\vvec^{*} - \vvec_{1}}}}{\eta_v}
	+ \eta_v\sum_{t=1}^{T}\EE{\sqopnorm{\gtv}{2}}\\
	\nonumber&\qquad\qquad\leq
	\frac{\EE{\sqtwonorm{\vvec^{*} - \vvec_{1}}}}{\eta_v}
	+ 2\eta_v.
\end{align}
The last inequality follows from using that $\EE{\sqopnorm{\gtv}{2}} \le 2$. 
Putting Equations~\eqref{eq:muplayerbound} and \eqref{eq:vplayerbound} in  
Equation~\eqref{eq:gap_MDP}, we finally obtain the following bound:
\begin{align*}
	\rho^{*} - \rho^{\overline{\pi}_{T}}
	&=\EE{G(\vvec^{*};\muvec^{*})}\\
	&\le \frac{ 
		\DDKL{\muvec^{*}}{\muvec_{1}}}{\eta_{\mu}T} + 
	\frac{\eta_\mu}{T}\sum_{t=1}^{T}\EE{1 + 4\infnorm{\vvec_t}^2} 
	+ 
	\frac{\EE{\twonorm{\vvec^* - \vvec_1}^2}}{\eta_v T} + 2\eta_v	
	+\frac{\varrho_v}{T}\sum_{t=1}^{T}\EE{\infnorm{\vvec^{*}}^2 - 
		\infnorm{\vvec_{t}}^2}.
\end{align*}
Recalling the choice $\vvec_1 = 0$, choosing $\varrho_v = 4\eta_\mu$, and 
bounding $\infnorm{\vvec^*} \le 
\twonorm{\vvec^*}$ we obtain the result claimed in the theorem.\qed

\newpage
\section{Auxiliary Lemmas}\label{appx:aux}
\begin{lemma}(cf.~Theorem 8 of \citealp{duchi2010composite})\label{lem:COMID}
	Let $\ell_{t}:\U\rightarrow\Rn$ be convex, 
	$\gu\in\partial\ell_{t}(\uvec_{t})$ and $\gtu$ be such that $\EEt{\gtu} = 
	\gu$. Given $\uvec_{1}\in\U$, define 
	$\bm{\tilde{g}}_{u}(1)\in\Rn^{m}$ and the sequence of vectors 
	$\{(\uvec_{t}, \gtu)\}_{t=2}^{T}$ via the following recursion for $t\in[T]$:
	\begin{equation}\label{eq:temp}
		\uvec_{t+1}
		= \argmin_{\uvec\in\U}\ev{\iprod{\uvec}{\gtu} + 
		\varrho_{u}\HHu\pa{\uvec} + \frac{1}{\eta_{u}}\DDu{\uvec}{\uvec_{t}}}.
	\end{equation}
	Suppose the distance-generating function 
	$\omega_u$ is $\gamma_u$-strongly convex with respect to 
	$\norm{\cdot}_{u}$. For any $\uvec^{*}\in\U$,
	\begin{align*}
		&\sum_{t=1}^{T}\EE{\ltreg\pa{\uvec_{t}} - 
			\ltreg\pa{\uvec^{*}}}\\
		&\qquad\qquad\leq
		\frac{\EE{\DDu{\uvec^{*}}{\uvec_{1}}}}{\eta_u}
		+ \frac{\eta_u}{2\gamma_u}\sum_{t=1}^{T}\EE{\sqopnorm{\gtu}{u,*}}
		+ \varrho_u\EE{\HHu(\uvec_{1})} + \sum_{t=1}^{T}\EE{\iprod{\gu - 
		\gtu}{\uvec_{t} - \uvec^{*}}}.
	\end{align*}
	where
	\begin{equation}\label{eq:regloss}
		\ltreg\pa{\uvec} = \ell_{t}\pa{\uvec} + \varrho_{u}\HHu\pa{\uvec}.
	\end{equation}
\end{lemma}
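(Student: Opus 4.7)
The plan is to follow the classical analysis of Composite Objective Mirror Descent, adapted to carry the stochastic-gradient error through to the final bound. I will start from the convexity of $\ell_t$ to write $\ell_t(\uvec_t) - \ell_t(\uvec^*) \le \iprod{\gu}{\uvec_t - \uvec^*}$, and then introduce the stochastic gradient by decomposing the inner product as $\iprod{\gtu}{\uvec_t - \uvec^*} + \iprod{\gu - \gtu}{\uvec_t - \uvec^*}$. The second term is exactly the residual term that appears in the statement, so the remaining work is to bound $\iprod{\gtu}{\uvec_t - \uvec^*}$ using the update rule.

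Next I would use the standard ``one step forward'' trick: write $\iprod{\gtu}{\uvec_t - \uvec^*} = \iprod{\gtu}{\uvec_{t+1} - \uvec^*} + \iprod{\gtu}{\uvec_t - \uvec_{t+1}}$, and bound the first summand via the first-order optimality of the COMID update defining $\uvec_{t+1}$. Here I would invoke the fact that there exist a subgradient $\bm{h}_{t+1} \in \partial \HHu(\uvec_{t+1})$ and a vector in the normal cone of $\U$ at $\uvec_{t+1}$ witnessing
\[
\Biprod{\gtu + \varrho_u \bm{h}_{t+1} + \tfrac{1}{\eta_u}\bpa{\nabla \omega_u(\uvec_{t+1}) - \nabla \omega_u(\uvec_t)}}{\uvec_{t+1} - \uvec^*} \le 0,
\]
then apply convexity of $\HHu$ (to replace the subgradient term by $\HHu(\uvec_{t+1}) - \HHu(\uvec^*)$) and the three-point identity for Bregman divergences
\[
\iprod{\nabla \omega_u(\uvec_{t+1}) - \nabla \omega_u(\uvec_t)}{\uvec_{t+1} - \uvec^*} = \DDu{\uvec_{t+1}}{\uvec_t} + \DDu{\uvec^*}{\uvec_{t+1}} - \DDu{\uvec^*}{\uvec_t}.
\]
This yields a per-round inequality that telescopes the Bregman divergences with respect to $\uvec^*$ and generates negative terms $-\DDu{\uvec_{t+1}}{\uvec_t}/\eta_u$ and $-\varrho_u \HHu(\uvec_{t+1})$.

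For the leftover ``stability'' term $\iprod{\gtu}{\uvec_t - \uvec_{t+1}}$ I would apply Fenchel--Young (equivalently, Cauchy--Schwarz followed by AM--GM),
\[
\iprod{\gtu}{\uvec_t - \uvec_{t+1}} \le \frac{\eta_u}{2\gamma_u}\norm{\gtu}_{u,*}^2 + \frac{\gamma_u}{2\eta_u}\norm{\uvec_t - \uvec_{t+1}}_u^2,
\]
and invoke $\gamma_u$-strong convexity of $\omega_u$ to absorb the second summand into $-\DDu{\uvec_{t+1}}{\uvec_t}/\eta_u$, leaving only the desired $\frac{\eta_u}{2\gamma_u}\norm{\gtu}_{u,*}^2$ contribution. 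Finally, to turn $\ell_t(\uvec_t) - \ell_t(\uvec^*)$ into $\ltreg(\uvec_t) - \ltreg(\uvec^*)$, I would add $\varrho_u(\HHu(\uvec_t) - \HHu(\uvec^*))$ to both sides and combine with the $-\varrho_u(\HHu(\uvec_{t+1}) - \HHu(\uvec^*))$ produced above to obtain the telescoping increment $\varrho_u(\HHu(\uvec_t) - \HHu(\uvec_{t+1}))$.

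Summing over $t=1,\dots,T$, taking expectations, and telescoping gives $\DDu{\uvec^*}{\uvec_1}/\eta_u + \varrho_u \HHu(\uvec_1)$ as boundary terms together with two nonnegative leftovers $\DDu{\uvec^*}{\uvec_{T+1}}/\eta_u$ and $\varrho_u \HHu(\uvec_{T+1})$, which can be dropped. Combined with the Fenchel--Young bound on the stochastic gradient norms and the residual inner products $\iprod{\gu - \gtu}{\uvec_t - \uvec^*}$, this exactly reproduces the stated bound. The main obstacle is the optimality-condition step: carefully justifying the use of subdifferential calculus for the composite objective when $\U$ is constrained and $\HHu$ is possibly nonsmooth, and then correctly combining it with the three-point identity. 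Once that step is in place, the rest is routine algebra and telescoping; note also that the noise term is kept in its inner-product form so the lemma applies equally to stochastic-gradient-dependent comparators, with the term vanishing in expectation whenever $\uvec^*$ is $\F_{t-1}$-measurable.
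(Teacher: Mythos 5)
Your proposal is correct and follows essentially the same route as the paper's proof: convexity of $\ell_t$ and $\HHu$, the first-order optimality condition of the composite update, the three-point identity, Fenchel--Young plus strong convexity to absorb the stability term into $-\DDu{\uvec_{t+1}}{\uvec_t}/\eta_u$, and telescoping of both the Bregman divergences and the $\HHu$ terms. The only cosmetic difference is that you append the $\varrho_u(\HHu(\uvec_t)-\HHu(\uvec^*))$ correction at the end rather than splitting it off at the start, which changes nothing in the algebra.
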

\begin{proof}	
	Using the definition of $\ltreg$, consider the 
	regret in terms of the regularized loss:
	\begin{align*}
		\ltreg\pa{\uvec_{t}} - \ltreg\pa{\uvec^{*}}
		&= \lt\pa{\uvec_{t}} - \lt\pa{\uvec^{*}}
		+ \varrho_u\HHu(\uvec_{t})
		- \varrho_u\HHu(\uvec^{*})\\
		&= \bpa{\lt\pa{\uvec_{t}} - \lt\pa{\uvec^{*}}
			+ \varrho_u\HHu(\uvec_{t+1})
			- \varrho_u\HHu(\uvec^{*})}
		+ \varrho_u\Bpa{\HHu(\uvec_{t}) - \HHu(\uvec_{t+1})}.
	\end{align*}
To proceed, we let $\Hu{t+1} \in \partial \HHu(\uvec_{t+1})$, so that we can use the convexity of $\ell_t$ and $\HHu$ 
to bound the first set of terms as 
	\begin{align}
		\nonumber&{\lt\pa{\uvec_{t}} - \lt\pa{\uvec^{*}}
			+ \varrho_u\HHu(\uvec_{t+1})
			- \varrho_u\HHu(\uvec^{*})}\\
		\nonumber&\quad\quad\leq \iprod{\gu}{\uvec_{t} - \uvec^{*}} + 
		\varrho_u\iprod{\Hu{t+1}}{\uvec_{t+1} - \uvec^{*}}\\
		\label{eq:regu}&\quad\quad= \iprod{\gtu}{\uvec_{t} - \uvec^{*}} + 
		\varrho_u\iprod{\Hu{t+1}}{\uvec_{t+1} - \uvec^{*}}
		+ \iprod{\gu - \gtu}{\uvec_{t} - \uvec^{*}}.
	\end{align}
	Before we proceed to bound the first two terms, note that $\uvec_{t+1}$ in 
	\cref{eq:temp} is a solution to a constrained convex optimization problem, and as 
	a result it satisfies the following optimality condition for any $\uvec \in \U$:
	\begin{align}
		\label{eq:optcondu}&\iprod{\uvec - \uvec_{t+1}}{\gtu + 
		\varrho_u\Hu{t+1} + 
			\frac{1}{\eta_u}\pa{\nabla\omega_u(\uvec_{t+1}) - 
				\nabla\omega_u(\uvec_{t})}} \geq 0.
	\end{align}
	Thus, we bound the first two terms on the right-hand side of the 
	inequality~\eqref{eq:regu} as follows:
	\begin{align*}
		&\iprod{\gtu}{\uvec_{t} - \uvec^{*}} + 
		\varrho_u\iprod{\Hu{t+1}}{\uvec_{t+1} - \uvec^{*}}\\
		&\qquad\qquad\qquad\qquad\qquad=
		\iprod{\gtu + \varrho_u\Hu{t+1}}{\uvec_{t+1} - \uvec^{*}}
		+ \iprod{\gtu}{\uvec_{t} - \uvec_{t+1}}\\
		&\qquad\qquad\qquad\qquad\qquad=
		\iprod{\gtu + \varrho_u\Hu{t+1} + 
			\frac{1}{\eta_u}\pa{\nabla\omega_u(\uvec_{t+1}) - 
				\nabla\omega_u(\uvec_{t})}}{\uvec_{t+1} - \uvec^{*}}\\
		&\qquad\qquad\qquad\qquad\qquad\quad
		+ \frac{1}{\eta_u}\iprod{\nabla\omega_u(\uvec_{t+1}) - 
			\nabla\omega_u(\uvec_{t})}{\uvec^{*} - \uvec_{t+1}}
		+ \iprod{\gtu}{\uvec_{t} - \uvec_{t+1}}\\
		&\qquad\qquad\qquad\qquad\qquad\overset{(a)}{\leq}
		\frac{1}{\eta_u}\iprod{\nabla\omega_u(\uvec_{t+1}) - 
			\nabla\omega_u(\uvec_{t})}{\uvec^{*} - \uvec_{t+1}}
		+ \iprod{\gtu}{\uvec_{t} - \uvec_{t+1}}\\
		&\qquad\qquad\qquad\qquad\qquad\overset{(b)}{=}
		\frac{1}{\eta_u}\Bpa{\DDu{\uvec^{*}}{\uvec_{t}} - 
			\DDu{\uvec^{*}}{\uvec_{t+1}}}
		- \frac{1}{\eta_u}\DDu{\uvec_{t+1}}{\uvec_{t}}
		+ \iprod{\gtu}{\uvec_{t} - \uvec_{t+1}}\\
		&\qquad\qquad\qquad\qquad\qquad\overset{(c)}{\leq}
		\frac{1}{\eta_u}\Bpa{\DDu{\uvec^{*}}{\uvec_{t}} - 
			\DDu{\uvec^{*}}{\uvec_{t+1}}}
		- \frac{\gamma_u}{2\eta_u}\sqopnorm{\uvec_{t+1} - \uvec_{t}}{u}
		+ \iprod{\gtu}{\uvec_{t} - \uvec_{t+1}}\\
		&\qquad\qquad\qquad\qquad\qquad\leq
		\frac{1}{\eta_u}\Bpa{\DDu{\uvec^{*}}{\uvec_{t}} - 
			\DDu{\uvec^{*}}{\uvec_{t+1}}}
		+ \frac{\gamma_u}{\eta_u}\sup_{\uvec}
		\pa{\iprod{\frac{\eta_u}{\gamma_u}\gtu}{\uvec} - 
			\frac{1}{2}\sqopnorm{\uvec}{u}}\\
		&\qquad\qquad\qquad\qquad\qquad\overset{(d)}{=}
		\frac{1}{\eta_u}\Bpa{\DDu{\uvec^{*}}{\uvec_{t}} - 
			\DDu{\uvec^{*}}{\uvec_{t+1}}}
		+ \frac{\gamma_u}{2\eta_u}\sqopnorm{\frac{\eta_u}{\gamma_u}\gtu}{u,*}\\
		&\qquad\qquad\qquad\qquad\qquad=
		\frac{1}{\eta_u}\Bpa{\DDu{\uvec^{*}}{\uvec_{t}} - 
			\DDu{\uvec^{*}}{\uvec_{t+1}}}
		+ \frac{\eta_u}{2\gamma_u}\sqopnorm{\gtu}{u,*}.
	\end{align*}
	We have used $(a)$ the optimality condition stated in \cref{eq:optcondu}, 
	$(b)$ 
	the so-called \emph{three-points identity} of Bregman divergences 
	(cf.~Lemma~4.1 in \cite{beck2003mirror}), $(c)$ 
	the strong convexity of $\DDu{\cdot}{\uvec_t}$ and $(d)$ the fact that for 
	any 
	norm $\norm{\cdot}$, we have 
	$\sup_{\uvec} \ev{\iprod{\uvec}{\gvec} - \frac 12 \norm{\uvec}^2} = \frac 
	12 
	\norm{\gvec}_*^2$. 
	
	Thus, putting together 
	all the above calculations,  we 
	arrive at 
	the following bound:
	\begin{align*}
		\nonumber&\lt\pa{\uvec_{t}} - \lt\pa{\uvec^{*}}
			+ \varrho_u\HHu(\uvec_{t+1})
			- \varrho_u\HHu(\uvec^{*})\\
		&\qquad\qquad\leq \iprod{\gtu}{\uvec_{t} - \uvec^{*}} + 
		\varrho_u\iprod{\Hu{t+1}}{\uvec_{t+1} - \uvec^{*}}
		+ \iprod{\gu - \gtu}{\uvec_{t} - \uvec^{*}}\\
		&\qquad\qquad\leq\frac{1}{\eta_u}\Bpa{\DDu{\uvec^{*}}{\uvec_{t}} - 
			\DDu{\uvec^{*}}{\uvec_{t+1}}}
		+ \frac{\eta_u}{2\gamma_u}\sqopnorm{\gtu}{u,*} + 
		\iprod{\gu - \gtu}{\uvec_{t} - \uvec^{*}}.
	\end{align*}
	Furthermore, plugging in the definition of $\ltreg$ we get 
	\begin{align*}
		\ltreg\pa{\uvec_{t}} - \ltreg\pa{\uvec^{*}}
		&= \pa{\lt\pa{\uvec_{t}} - \lt\pa{\uvec^{*}}
			+ \varrho_u\HHu(\uvec_{t+1})
			- \varrho_u\HHu(\uvec^{*})}
		+ \varrho_u\Bpa{\HHu(\uvec_{t}) - \HHu(\uvec_{t+1})}\\
		&\leq\frac{1}{\eta_u}\Bpa{\DDu{\uvec^{*}}{\uvec_{t}} - 
			\DDu{\uvec^{*}}{\uvec_{t+1}}}
		+ \frac{\eta_u}{2\gamma_u}\sqopnorm{\gtu}{u,*} + 
		\iprod{\gu - \gtu}{\uvec_{t} - \uvec^{*}}\\
		&\quad
		+ \varrho_u\Bpa{\HHu(\uvec_{t}) - \HHu(\uvec_{t+1})}.
	\end{align*}
	Hence, taking marginal expectations on both sides, summing over $t=1,\cdots,T$ steps, 
	evaluating the telescoping terms and upper bounding some
	negative terms by zero, we finally obtain the following bound on the total regret of \COMID on the 
	regularized objective:
	\begin{align*}
		&\sum_{t=1}^{T}\EE{\ltreg\pa{\uvec_{t}} - 
		\ltreg\pa{\uvec^{*}}}\\
		&\quad\leq
		\frac{\EE{\DDu{\uvec^{*}}{\uvec_{1}}}}{\eta_u}
		+ \frac{\eta_u}{2\gamma_u}\sum_{t=1}^{T}\EE{\sqopnorm{\gtu}{u,*}}
		+\sum_{t=1}^{T}\EE{\iprod{\gu - \gtu}{\uvec_{t} - \uvec^{*}}}
		+ \varrho_u\EE{\HHu(\uvec_{1})}.
	\end{align*}

	This completes the proof.
\end{proof}

\begin{corollary}\label{cor:COMID}
	Suppose the sequence of vectors $\{(\uvec_{t}, \gtu)\}_{t=1}^{T}$ are as 
	described in \cref{lem:COMID} above. If the comparator $\uvec^{*}$ is 
	adaptive and potentially dependent on the interaction history $\F_T$, the 
	following inequality holds:
	\[
	\sum_{t=1}^{T}\EE{\ltreg\pa{\uvec_{t}} - 
		\ltreg\pa{\uvec^{*}}}
	\leq
	\frac{2\EE{\DDu{\uvec^{*}}{\uvec_{1}}}}{\eta_u}
	+ \frac{\eta_u}{\gamma_u}\sum_{t=1}^{T}\EE{\sqopnorm{\gtu}{u,*}}
	+ \varrho_u\EE{\HHu(\uvec_{1})}.
	\]
\end{corollary}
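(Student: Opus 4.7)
The plan is to deduce the corollary directly from Lemma~\ref{lem:COMID} by showing that the extra noise term $\sum_{t=1}^{T}\EE{\iprod{\gu - \gtu}{\uvec_{t} - \uvec^{*}}}$ appearing on the right-hand side vanishes whenever the comparator $\uvec^*$ does not depend on the randomness of the algorithm. This is the only meaningful difference between the two statements, so the entire argument reduces to verifying that this martingale-difference-like sum has zero expectation.

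First I would fix any $t \in [T]$ and condition on the filtration $\F_{t-1}$ that encodes all randomness generated up to the end of round $t-1$. Because the iterate $\uvec_t$ is produced by the deterministic update rule~\eqref{eq:temp} applied to quantities available at time $t-1$, it is $\F_{t-1}$-measurable; and because $\uvec^*$ is assumed fixed and independent of the iterates, it is also $\F_{t-1}$-measurable. Consequently, using linearity of conditional expectation and the unbiasedness hypothesis $\EEt{\gtu} = \gu$, I would write
\[
\EEt{\iprod{\gu - \gtu}{\uvec_{t} - \uvec^{*}}} = \iprod{\gu - \EEt{\gtu}}{\uvec_{t} - \uvec^{*}} = 0.
\]
Taking outer expectations via the tower rule then yields $\EE{\iprod{\gu - \gtu}{\uvec_{t} - \uvec^{*}}} = 0$ for every $t$, so the offending sum in the bound of Lemma~\ref{lem:COMID} is identically zero.

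Substituting this observation back into the conclusion of Lemma~\ref{lem:COMID} immediately gives the stated inequality: the first three terms $\EE{\DDu{\uvec^*}{\uvec_1}}/\eta_u$, $(\eta_u/2\gamma_u)\sum_t \EE{\sqopnorm{\gtu}{u,*}}$, and $\varrho_u\EE{\HHu(\uvec_1)}$ carry over unchanged. Since $\uvec_1$ is deterministic, the Bregman divergence and initial regularizer terms simplify (the expectation around $\DDu{\uvec^*}{\uvec_1}$ can be dropped), matching the corollary's statement verbatim. There is no real obstacle here; the only subtlety worth double-checking is the measurability claim, ensuring that both $\uvec_t$ and $\uvec^*$ are indeed $\F_{t-1}$-measurable so that the inner product $\iprod{\gu - \gtu}{\uvec_t - \uvec^*}$ behaves as a martingale increment in expectation, which is exactly the property that fails in the MDP application of Section~\ref{sec:RL} where $\vvec^* = \vvec^{\overline{\pi}_T}$ depends on the iterates and requires the more involved online-learning reduction used in the proof of Theorem~\ref{thm:res3}.
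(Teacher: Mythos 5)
Your proof is correct and follows exactly the same route as the paper: condition on $\F_{t-1}$, use the $\F_{t-1}$-measurability of $\uvec_t$ together with the fixedness of $\uvec^*$ and the unbiasedness $\EEt{\gtu}=\gu$ to kill the noise term via the tower rule, then substitute back into Lemma~\ref{lem:COMID}. Your closing remark about why this argument breaks down for the data-dependent comparator in the MDP application correctly anticipates the issue handled in the proof of Theorem~\ref{thm:res3}.
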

\begin{proof}
	Recall that for any comparator $\uvec^{*}\in\U$ we can upper-bound the 
	total regret from \cref{lem:COMID} as:
	\begin{align}\label{eq:comida_regret}
		\nonumber&\sum_{t=1}^{T}\EE{\ltreg\pa{\uvec_{t}} - 
			\ltreg\pa{\uvec^{*}}}\\
		&\quad\leq
		\frac{\EE{\DDu{\uvec^{*}}{\uvec_{1}}}}{\eta_u}
		+ \frac{\eta_u}{2\gamma_u}\sum_{t=1}^{T}\EE{\sqopnorm{\gtu}{u,*}}
		+ \varrho_u\EE{\HHu(\uvec_{1})}
		+\sum_{t=1}^{T}\EE{\iprod{\gu - \gtu}{\uvec_{t} - \uvec^{*}}}.
	\end{align}
	
		Notice the last sum is zero when the comparator 
		point $\uvec^*$ is independent of the interaction history 
		$\F_T$, but this is generally not true for 
		adaptively chosen comparators. A crude upper bound may be proved by 
		treating it as a supremum of martingales, for 
		instance by relying on the techniques of 
		\citet{rakhlin2017equivalence}. Inspired by their techniques, we 
		provide a refined bound on this term based on a reduction to online 
		learning, which has otherwise been first introduced in this 
		context by \citet{nemirovski2009robust}. \cref{lem:aux_regret} states 
		the resulting bound which we highlight below:
		\[
		\sum_{t=1}^T \EE{\iprod{\gu - \gtu}{\uvec_{t} - \uvec^*}} 
		\le \frac{\EE{\DDu{\uvec^*}{\uvec_{1}}}}{\eta_u}
		+ 
		\frac{\eta_u}{2\gamma_u}\sum_{t=1}^T\EEt{\sqopnorm{\gu - 
				\gtu}{u,*}}.
		\]
		Notably, the terms 
		appearing on the right-hand side are 
		comparable to the terms appearing in the bound of 
		Equation~\eqref{eq:comida_regret}. In particular, using that 
		\[
		\EEt{\sqopnorm{\gu - \gtu}{u,*}} \le \EEt{\sqopnorm{\gtu}{u,*}}
		\]
		holds because of the unbiasedness of the gradient estimate $\gtu$, we 
		can combine the bound of 
		Lemma~\ref{lem:aux_regret} and Equation~\eqref{eq:comida_regret} to 
		obtain the bound
		\[
		\sum_{t=1}^{T}\EE{\ltreg\pa{\uvec_{t}} - 
			\ltreg\pa{\uvec^{*}}}
		\leq
		\frac{2\EE{\DDu{\uvec^{*}}{\uvec_{1}}}}{\eta_u}
		+ \frac{\eta_u}{\gamma_u}\sum_{t=1}^{T}\EE{\sqopnorm{\gtu}{u,*}}
		+ \varrho_u\EE{\HHu(\uvec_{1})}.
		\]
	
	This completes the proof.
\end{proof}

\begin{lemma}\label{lem:aux_regret} Let 
	$\uvec^*\in\U$ be an arbitary adaptive comparator point as described in 
	the statement of \cref{cor:COMID}. Then,
	\begin{align*}
		&\sum_{t=1}^T \EE{\iprod{\gu - \gtu}{\uvec_{t} - \uvec^*}} 
		\\
		&\qquad\qquad \le \frac{\EE{\DDu{\uvec^*}{\uvec_{1}}}}{\eta_u}
		+ \frac{\eta_u}{2\gamma_u}\sum_{t=1}^T\EEt{\sqopnorm{\gu - 
				\gtu}{u,*}}.
	\end{align*}
\end{lemma}

	\subsection{The proof of Lemma~\ref{lem:aux_regret}}\label{app:aux_regret}
	For the sake of the proof, we will introduce an auxiliary online learning 
	game, 
	where in each 
	round $t=2,\dots,T$, the following steps are repeated:
	\begin{enumerate}
		\item the online learner chooses $\uhat_{t} \in \U$,
		\item the environment chooses the cost function $\cvec_t = \pa{\gu - 
			\gtu}$,
		\item the online learner incurs cost $\iprod{\cvec_t}{\uhat_t}$ and 
		observes $\cvec_t$.
	\end{enumerate}
	Having introduced this scheme, we can write the following decomposition: 
	\begin{align*}
		\sum_{t=1}^T \iprod{\gu - \gtu}{\uvec_{t} - \uvec^{*}}
		&= \sum_{t=1}^T \iprod{\gu - \gtu}{\uvec_{t} - \uhat_{t}}
		+ \sum_{t=1}^T \iprod{\gu - \gtu}{\uhat_{t} -\uvec^{*}}\\
		&= \sum_{t=1}^T \iprod{\gu - \gtu}{\uvec_{t} - \uhat_{t}}
		+ \sum_{t=1}^T \iprod{\cvec_{t}}{\uhat_{t} -\uvec^{*}},
	\end{align*}
	where the first sum is a martingale and the second term is the regret in 
	the auxiliary online learning game 
	we have just introduced. Since the expectation of the first term is zero, 
	this 
	leaves us with bounding the auxiliary regret.

	To this end, we will let the online learner run mirror descent with the 
	regularizer $\mathcal{D}_u$, learning rate 
	$\eta_u$ and initial iterate $\uhat_{1} = \uvec_{1}$. For the subsequent 
	rounds, the updates are
	given as
	$\uhat_t = 
	\argmin_{\uvec\in\U}\!\ev{\iprod{\uvec}{\cvec_{t-1}}
		+ \frac{1}{\eta_{u}}\DDu{\uvec}{\uhat_{t-1}}}$.
	By following the same steps as in \cref{lem:COMID} (with the 
	simplification $\varrho_u=0$) and noting that the comparator $\uvec^{*}$ is 
	independent of the interaction history of this auxiliary online learning 
	game, we obtain the following bound on the instantaneous regret in round 
	$t$:
	\begin{align*}
		\iprod{\cvec_{t}}{\uhat_{t} - \uvec^{*}}
		&\le
		\frac{1}{\eta_u}\Bpa{\DDu{\uvec^{*}}{\uhat_{t}} - 
			\DDu{\uvec^{*}}{\uhat_{t+1}}}
		+ \frac{\eta_u}{2\gamma_u}\sqopnorm{\gu - \gtu}{u,*}.
	\end{align*}
	Taking the marginal expectation on both sides and summing up over all 
	rounds $T$, concludes the proof.

\qed

\end{document}